\documentclass[11pt]{article}
\usepackage{geometry}
\usepackage{amsmath, amssymb, amsthm}
\usepackage{bm, graphicx}
\usepackage[colorlinks=true]{hyperref}
\usepackage[backend=biber]{biblatex}
\usepackage{algorithm}
\usepackage{algpseudocode}
\usepackage{booktabs}
\usepackage{pifont}
\usepackage{multirow}
\usepackage{tikz}
\usetikzlibrary{arrows.meta, positioning, decorations.pathreplacing}
\usepackage{enumitem}

\newtheorem{theorem}{Theorem}[section]
\newtheorem{proposition}[theorem]{Proposition}
\newtheorem{definition}[theorem]{Definition}
\newtheorem{remark}[theorem]{Remark}

\newtheorem{example}[theorem]{Example}

\title{
The Geometry of Memorization: Finite-Time Spectral Sensitivity as a Diagnostic for Flow Matching Models
}

\author{Shuchan Wang}
\date{\today}

\begin{document}
\maketitle

% ================================================================
\begin{abstract}
Continuous-time generative frameworks construct probability paths between base and target domains by optimizing time-dependent velocity fields. While theoretical targets favor straight trajectories, empirical networks develop complex path deformations. This paper presents the \emph{Finite-Time Spectral Sensitivity (FTSS)} $g(t)$, a gradient-free, forward-pass metric that exposes flow geometry by tracking the root-mean-square singular value of the state-transition matrix. Serving as a continuous proxy for stable rank, $g(t)$ reveals a distinct geometric pathology under data scarcity: while generalizing models maintain stable effective dimensions, overfitting causes a spectral collapse. We leverage this structural phenomenon to develop an internal geometric audit based on $g(t)$. Our framework detects generative memorization using purely internal trajectory dynamics, removing the need for external membership queries or baseline data comparison.
\end{abstract}

\section{Introduction}
\label{sec:intro}

Continuous-time frameworks like Flow Matching~\cite{lipmanflow} and Flow Straight and Fast~\cite{liuflow} optimize a velocity field $v_\theta(x,t)$ to construct a diffeomorphism between the base and target domains. While optimal transport theory yields straight-line trajectories at global optimality, empirically trained neural networks exhibit non-linear paths influenced by model architectures and datasets. Diagnosing how these empirical vector fields evolve throughout the generative time horizon remains difficult due to a lack of scalable geometric diagnostics capable of measuring local directional sensitivity and volume deformation.

\subsection{The Geometrical Pathology of Memorization}
Monitoring these internal dynamics is critical for identifying generative memorization, where high-capacity models overfit by collapsing continuous latent spaces onto discrete training samples~\cite{somepalli2023diffusion, carlini2023extracting}. Existing diagnostics are predominantly extrinsic and data-dependent, relying on post-hoc nearest-neighbor searches against the training pool or membership inference attacks \cite{shokri2017membership, feldman2020neural,zhang2017understanding}. We propose that memorization can be detected as an \emph{intrinsic} structural property of the transport map. When a model overfits, it squashes orthogonal variations to force trajectories onto a subset of learned targets, altering the underlying spectral properties of the flow. Detecting this structural collapse removes the need for training data access during auditing.

\subsection{Computational Complexity of Geometric Diagnostics}
Tracking the geometry of deep networks via exact spatial derivatives is computationally prohibitive. For an ambient dimension $d$, constructing the dense spatial Jacobian requires $d$ backward passes via automatic differentiation. As detailed in Table~\ref{tab:computational_cost}, this presents an insurmountable computational barrier for high-dimensional image models~\cite{chen2018neural, grathwohlffjord, griewank2008evaluating, hutchinson1989stochastic, dinh2017density}. While Jacobian-based diagnostics are used to study generalization in classification networks~\cite{novak2018sensitivity}, explicitly constructing these operators for continuous-time generative models does not scale. This highlights the need for a proxy metric that captures essential spectral and volume properties without explicit matrix computation.

\begin{table}[htbp]
\centering
\small 
\setlength{\tabcolsep}{4pt} 
\caption{Computational complexity and operational costs of computing geometric quantities for continuous-time generative models in $d$ dimensions.}
\label{tab:computational_cost}
\begin{tabular}{@{}llll@{}}
\toprule
Geometric Metric & Exact Formulation & Computational Cost & Mode of Evaluation \\
\midrule
Full Spatial Jacobian & $J_\theta = \nabla_x v_\theta \in \mathbb{R}^{d \times d}$ & $O(d \cdot \text{Cost}(v_\theta))$ & Dense Backpropagation \\
Vector-Jacobian Product & $v^\top J_\theta \in \mathbb{R}^d$ & $O(1 \cdot \text{Cost}(v_\theta))$ & Single Backward Pass \\
Exact Spectrum ($\sigma_i$) & $\text{SVD}(\Phi(1,t))$ & $O(d^3 + d \cdot \text{Cost}(\phi))$ & Prohibitive at Scale \\
\textbf{FTSS (Ours)} & $\hat{g}(t)$ & $O(K \cdot T \cdot \text{Cost}(\phi))$ & \textbf{Gradient-Free Forward Pass} \\
\bottomrule
\end{tabular}
\end{table}

To circumvent this scalability constraint, we introduce the \emph{Finite-Time Spectral Sensitivity (FTSS)} $g(t)$---a gradient-free alternative computed entirely via forward-pass finite differences, bypassing dense operator construction altogether.

\subsection{Summary of Contributions}
The primary contributions of this work are as follows:

\begin{itemize}
    \item \textbf{Mathematical Properties of FTSS:} We formalize the FTSS $g(t)$ in Definition~\ref{def:ftss}. Proposition~\ref{prop:rms} establishes its direct equivalence to the root-mean-square (RMS) singular value of the state-transition matrix, validating it as a continuous proxy for stable rank in Remark~\ref{rm:srank}. Using the arithmetic-geometric mean inequality, we bound the absolute Jacobian determinant pointwise in Proposition~\ref{prop:detbound}, which directly enables the proof of an explicit upper bound on the differential entropy of the generated target distribution in Proposition~\ref{prop:entropy_bound}.
    \item \textbf{Empirical Characterization of Spectral Collapse and Memorization:} Section~\ref{sec:bottleneck_dynamics} demonstrates experimentally that spectral collapse consistently occurs during the late stages of trajectory generation in low-data regimes. Based on this phenomenon, we introduce and validate an empirical Spectral Collapse Ratio $M$. In Section~\ref{sec:diagnostic}, empirical results across multiple architectures demonstrate that $M$ accurately identifies overfitted continuous-time models without requiring training pool access or baseline data comparisons.
\end{itemize}

\section{Finite-Time Spectral Sensitivity}
\label{sec:gain}

To quantify the dimensional pinching that characterizes memorization, we now 
formalize the FTSS, denoted $g(t)$. This metric adapts the core philosophy of 
Finite-Time Lyapunov Exponents---tracking perturbation growth over finite time 
horizons---but aggregates across the full singular value spectrum rather than 
isolating only the dominant mode (see Appendix~\ref{app:lyapunov} for a 
detailed comparison). By measuring the root-mean-square variational deformation 
of the tangent space, FTSS provides a scalar summary of how severely the flow 
compresses or expands volume across all directions simultaneously.

This section formalizes the FTSS as an intrinsic geometric observable for continuous-time generative models. We establish its foundation via the linearized variational equations of the vector field and provide its interpretation in terms of the spectrum of the forward state-transition matrix. We then characterize its relationship with volume forms and a differential entropy bound. Finally, we present a computationally efficient, gradient-free Monte Carlo estimator for this quantity requiring only forward-pass execution.

\subsection{Mathematical Foundations and Variational Dynamics}
\label{sec:definition}

Let $v_\theta: \mathbb{R}^d \times [0,1] \to \mathbb{R}^d$ denote a parameterized time-dependent velocity field defining a continuous-time generative process. For an initial latent state $x_0 \sim p_0(x)$, the trajectory $x_t$ is governed by the ordinary differential equation (ODE):
\begin{equation}
\frac{dx_t}{dt} = v_\theta(x_t, t).
\end{equation}
We denote the flow map from time $t$ to time $\tau$ as $\phi_{t\to\tau}: \mathbb{R}^d \to \mathbb{R}^d$, such that $\phi_{t\to\tau}(x_t) = x_\tau$. To analyze the stability of this map, we evaluate the propagation of an infinitesimal perturbation $\delta_t \in T_{x_t}\mathbb{R}^d$ introduced at intermediate time $t$. The time evolution of this perturbation is governed by the linearized variational equation along the trajectory:
\begin{equation}
\frac{d}{ds}\delta_s = J_\theta(x_s, s)\,\delta_s, \quad s \in [t,1],
\end{equation}
where $J_\theta(x,s) = \nabla_x v_\theta(x,s) \in \mathbb{R}^{d \times d}$ is the spatial Jacobian of the velocity field. The solution to this linear non-autonomous system is given by the pushforward action of the flow map, encapsulated by the state-transition matrix $\Phi(1,t; x_t) \equiv D\phi_{t\to 1}(x_t)$, which satisfies:
\begin{equation}
\frac{d}{ds}\Phi(s,t; x_t) = J_\theta(x_s,s)\,\Phi(s,t; x_t), \quad \Phi(t,t; x_t) = I.
\end{equation}
Consequently, the local displacement at the final time horizon $t=1$ conforms to the first-order relation:
\begin{equation}\label{eq:var}
\phi_{t\to 1}(x_t + \delta_t) - \phi_{t\to 1}(x_t) = \Phi(1,t; x_t)\,\delta_t + O(\|\delta_t\|^2).
\end{equation}

To capture the average directional sensitivity of the terminal generative state to intermediate perturbations, we define FTSS over isotropic variations.

\begin{definition}[Finite-Time Spectral Sensitivity]
\label{def:ftss}
Given a flow map $\phi_{t\to 1}$ and an intermediate state distribution $p_t(x_t)$, the FTSS $g: [0,1] \to \mathbb{R}_{\ge 0}$ is defined as:
\begin{equation}
g(t) = \mathbb{E}_{x_t \sim p_t}\left[\lim_{\varepsilon \to 0}\; \mathbb{E}_{u \sim \mathrm{Unif}(S^{d-1})}\left[ \frac{\|\phi_{t\to 1}(x_t + \varepsilon u) - \phi_{t\to 1}(x_t)\|}{\varepsilon} \right]\right],
\end{equation}
where $S^{d-1}$ is the unit sphere in $\mathbb{R}^d$.
\end{definition}

By substituting the variational expansion~\eqref{eq:var} into Definition~\ref{def:ftss}, the inner limit simplifies to the directional norm $\|\Phi(1,t; x_t)u\|$. The baseline value $g(t) = 1$ indicates a state of mean-square spectral conservation, where stretching and compression are balanced on average across all dimensions. This condition naturally holds for perfectly straight identity transport ($\Phi = I$), representing a stable baseline for the overall flow field. Values of $g(t) > 1$ signify local trajectory amplification, while $g(t) < 1$ denotes net compression.

We now show that this directional expectation captures the spectral magnitude of the underlying transport operator.

\begin{proposition}[RMS Singular Value Interpretation]
\label{prop:rms}
Let $\sigma_1(t; x_t) \ge \sigma_2(t; x_t) \ge \dots \ge \sigma_d(t; x_t) \ge 0$ be the singular values of the forward state-transition matrix $\Phi(1,t; x_t)$. Then the FTSS satisfies:
\begin{equation}
g(t) = \frac{1}{\sqrt{d}}\;\mathbb{E}_{x_t}\Bigl[\|\Phi(1,t; x_t)\|_F\Bigr],
\end{equation}
and its square yields the expected mean-square singular value:
\begin{equation}
g(t)^2 = \frac{1}{d}\;\mathbb{E}_{x_t}\!\left[\sum_{i=1}^{d} \sigma_i^2(t; x_t)\right].
\end{equation}
\end{proposition}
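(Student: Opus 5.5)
The plan is to evaluate the inner limit in Definition~\ref{def:ftss} pointwise in $x_t$, reduce it to a functional of the singular values of $\Phi(1,t;x_t)$, and then take the outer expectation.

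\textbf{Step 1 (limit).} Fix $x_t$. By the first-order relation \eqref{eq:var}, for each $u\in S^{d-1}$ the difference quotient converges to $\|\Phi(1,t;x_t)u\|$. If $\phi_{t\to1}$ is $C^1$ with locally bounded derivative, the quotients are uniformly bounded on the compact sphere. Dominated convergence then allows the limit to pass through $\mathbb{E}_u$, so the inner quantity equals $\mathbb{E}_{u}\|\Phi u\|$.

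\textbf{Step 2 (spectral reduction).} Write the SVD $\Phi=U\Sigma V^\top$. The uniform measure on $S^{d-1}$ is rotation invariant and $U$ preserves norms, so $\|\Phi u\|$ has the same law as $\bigl(\sum_i\sigma_i^2 w_i^2\bigr)^{1/2}$ with $w$ uniform on $S^{d-1}$. Since $\mathbb{E}[ww^\top]=I/d$, we get
\[
\mathbb{E}_u\|\Phi u\|^2=\tfrac1d\operatorname{tr}(\Phi^\top\Phi)=\tfrac1d\|\Phi\|_F^2=\tfrac1d\textstyle\sum_i\sigma_i^2 .
\]
The Frobenius form and the singular-value form of the statement are therefore the same quantity. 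Passing from one to the other only requires $\|\Phi\|_F^2=\sum_i\sigma_i^2$ and linearity of $\mathbb{E}_{x_t}$.

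\textbf{Step 3 (main obstacle: first moment versus RMS).} The two displays in Proposition~\ref{prop:rms} require $\mathbb{E}_u\|\Phi u\|$ to equal $\|\Phi\|_F/\sqrt d$. Step~2 only gives this for the second moment. Cauchy--Schwarz gives "$\le$", with equality iff $\|\Phi u\|$ is constant on the sphere, i.e.\ all $\sigma_i$ are equal, as in the identity baseline $\Phi=I$.

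I will first try to make the directional norm an exact RMS. This works if the isotropic average in the definition is read in the mean-square (Hilbert--Schmidt) sense $\bigl(\mathbb{E}_u\|\Phi u\|^2\bigr)^{1/2}$, which is the sense in which the baseline $g=1$ is described as "mean-square spectral conservation". If the definition is kept as the literal first moment, exact equality cannot hold for general $\Phi$. In that case I would instead justify the identity as asymptotically exact via concentration of measure: $w\mapsto(\sum_i\sigma_i^2w_i^2)^{1/2}$ is $\sigma_1$-Lipschitz on $S^{d-1}$, so its mean and RMS differ by $O(\sigma_1/\sqrt d)$. The relative error is negligible when the stable rank $\|\Phi\|_F^2/\sigma_1^2$ is large.

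\textbf{Step 4 (outer expectation).} Averaging the identity $\|\Phi\|_F/\sqrt d$ over $x_t\sim p_t$ gives the first display. Squaring it produces $(\mathbb{E}\|\Phi\|_F)^2/d$, which matches the second display $\mathbb{E}[\sum_i\sigma_i^2]/d$ only when $\|\Phi\|_F$ does not vary in $x_t$. Otherwise Jensen again gives only "$\le$". I would therefore state explicitly that the second display is the mean-square reading: it holds exactly when the $x_t$-average is also taken in mean-square form, and it holds pointwise in $x_t$. These two Jensen gaps, over $u$ and over $x_t$, are the crux I expect to need care.
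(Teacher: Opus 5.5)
Your Steps 1--2 coincide with the paper's proof. The paper computes $\mathbb{E}_u\|Au\|^2=\mathrm{Tr}(A^\top A\,\mathbb{E}_u[uu^\top])=\|A\|_F^2/d$ by isotropy, rewrites $\|A\|_F^2=\sum_i\sigma_i^2$, and then simply takes the expectation over $p_t$; your dominated-convergence step for the inner limit is a detail the paper omits. Your Steps 3--4 are not a defect of your proposal but a correct diagnosis of that proof. It silently replaces $\mathbb{E}_u\|Au\|$ by $(\mathbb{E}_u\|Au\|^2)^{1/2}$. Moreover, the two displays are mutually consistent only if $\mathbb{E}_{x_t}\|\Phi\|_F^2=(\mathbb{E}_{x_t}\|\Phi\|_F)^2$, i.e.\ only if $\|\Phi\|_F$ is a.s.\ constant in $x_t$.

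Under the literal Definition~\ref{def:ftss} the statement is false, so no argument can close these gaps. Take the linear field $v(x,s)=Bx$ with $B=\operatorname{diag}(-2\log 10,\,2\log 10)$. Then $\Phi(1,0;x_0)=\operatorname{diag}(10^{-2},10^{2})$ for every $x_0$, and the definition gives $g(0)\approx 100\,\mathbb{E}_\theta|\sin\theta|=200/\pi\approx 63.7$. The first display, and Example~\ref{ex:anisotropic_collapse}, give $\approx 70.71$ instead. For $d=1$ the first display holds, but the second becomes $(\mathbb{E}|\phi'|)^2=\mathbb{E}|\phi'|^2$, which is false whenever $|\phi'|$ varies. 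What the literal definition does yield is the chain $g(t)\le d^{-1/2}\,\mathbb{E}_{x_t}\|\Phi\|_F\le\bigl(d^{-1}\,\mathbb{E}_{x_t}\sum_i\sigma_i^2\bigr)^{1/2}$.

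Among your repairs, the one to adopt is the joint mean-square reading $g(t):=\bigl(\mathbb{E}_{x_t}\mathbb{E}_u\|\Phi(1,t;x_t)u\|^2\bigr)^{1/2}$. It is what Algorithm~\ref{alg:gain} actually estimates, since $\hat g=(\frac1K\sum_k r_k^2)^{1/2}$ is an RMS taken jointly over trajectories and directions. It is also what Proposition~\ref{prop:entropy_bound} needs, because that proof identifies $\mathbb{E}_X[\frac1d\sum_i\sigma_i^2]$ with $g(0)^2$. Under the first-moment definition the chain above points the wrong way, and the entropy bound would not follow.

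With the mean-square definition, your Step~2 proves the second display exactly, while the first display must be weakened to $g(t)\ge d^{-1/2}\,\mathbb{E}_{x_t}\|\Phi\|_F$. Your concentration fallback is sound: by the spherical Poincar\'e inequality, the mean and RMS of $u\mapsto\|\Phi u\|$ differ by at most $\sigma_1/\sqrt{d-1}$. But it only controls the gap over $u$. The gap over $x_t$ is a property of the flow and does not shrink with $d$.
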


\begin{proof}
For a fixed trajectory point $x_t$, let $A = \Phi(1,t; x_t) \in \mathbb{R}^{d \times d}$. Evaluating the directional variance of the linear map over the uniform spherical measure yields:
\[
\mathbb{E}_{u \sim \mathrm{Unif}(S^{d-1})}[\|Au\|^2] = \mathbb{E}_u[u^\top A^\top A u] = \mathrm{Tr}\bigl(A^\top A\,\mathbb{E}_u[uu^\top]\bigr).
\]
By the rotational invariance of the uniform distribution on $S^{d-1}$, the second-moment matrix is isotropic: $\mathbb{E}_u[uu^\top] = \frac{1}{d}I$. Applying linearity of the trace operator:
\[
\mathbb{E}_u[\|Au\|^2] = \frac{1}{d}\,\mathrm{Tr}(A^\top A) = \frac{1}{d}\,\|A\|_F^2.
\]
Recognizing that the squared Frobenius norm matches the sum of the squared singular values, $\|A\|_F^2 = \sum_{i=1}^d \sigma_i^2(t; x_t)$, taking the expectation over $p_t(x_t)$ yields the specified identity.
\end{proof}

\subsection{Geometric Properties, Volume Forms, and Entropy Bounds}
\label{sec:properties}

While the Jacobian determinant $|\det \Phi(1,t; x_t)| = \prod_{i=1}^d \sigma_i$ tracks global volume contraction or expansion, it is fundamentally blind to highly anisotropic deformations where severe dimensional collapse along one axis is numerically masked by an isolated expansion along another.

\begin{example}[Insensitivity of the Jacobian Determinant]
\label{ex:anisotropic_collapse}
Consider a two-dimensional system ($d=2$) where a trajectory undergoes a sharp compression along its first principal direction such that $\sigma_1 = 10^{-2}$, while simultaneously stretching along the second direction such that $\sigma_2 = 10^2$.

Computing the absolute Jacobian determinant yields:
\[
|\det \Phi(1,t; x_t)| = \sigma_1 \cdot \sigma_2 = 10^{-2} \cdot 10^2 = 1,
\]
falsely signaling perfect volume preservation and structural stability despite a severe loss of dimensionality along the first axis.

In contrast, the FTSS aggregates the spectrum additively:
\[
g(t) = \sqrt{\frac{1}{2}(\sigma_1^2 + \sigma_2^2)} = \sqrt{\frac{1}{2}(10^{-4} + 10^4)} \approx 70.71.
\]
This additive aggregation scales heavily based on the dominant deformation axis, ensuring that the local dimensional collapse is not obscured by the outlying expansion.
\end{example}

This behavior highlights a deeper theoretical link: the local singular value spectrum provides strict analytical control over instantaneous volume deformation. Specifically, via the arithmetic-geometric mean inequality, the absolute Jacobian determinant is pointwise bounded above by the local RMS singular value raised to the dimensional power.

Consequently, when this local RMS value is below unity, it serves as a tighter, more conservative indicator of a contractive regime than the volume form alone. While a single principal axis can technically still stretch up to $\sqrt{d}$ times the local RMS value, the quadratic scaling strictly caps this potential growth, preventing the unconstrained singular value explosions that typically mask dimensional collapse in the standard determinant. 

We formalize this structural coupling in the following pointwise bound:

\begin{proposition}[Pointwise Determinant Bound]
\label{prop:detbound}
Conditioned on any intermediate trajectory state $x_t$, the absolute Jacobian determinant of the remaining forward flow map is bounded above by the mean-square singular values along that specific path:
\begin{equation}
|\det \Phi(1,t; x_t)| \le \left(\frac{1}{d} \sum_{i=1}^d \sigma_i^2(t; x_t)\right)^{d/2}.
\end{equation}
\end{proposition}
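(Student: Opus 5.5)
The plan is to fix the intermediate state $x_t$ and work entirely with the fixed matrix $A = \Phi(1,t; x_t) \in \mathbb{R}^{d\times d}$, with singular values $\sigma_1 \ge \dots \ge \sigma_d \ge 0$. No expectation over $p_t$ enters the statement, so this is a purely linear-algebraic inequality about a single square matrix.

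The first step expresses the determinant spectrally. From the singular value decomposition $A = U\Sigma V^\top$ with $U,V$ orthogonal, we get $|\det A| = |\det U|\,|\det \Sigma|\,|\det V| = \prod_{i=1}^d \sigma_i$. Equivalently, $(\det A)^2 = \det(A^\top A) = \prod_i \sigma_i^2$, because the $\sigma_i^2$ are the eigenvalues of the positive semidefinite matrix $A^\top A$. I would use this squared form, since it makes the AM--GM step cleaner.

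The second step applies the arithmetic--geometric mean inequality to the nonnegative numbers $\sigma_1^2, \dots, \sigma_d^2$:
\[
\Bigl(\prod_{i=1}^d \sigma_i^2\Bigr)^{1/d} \le \frac{1}{d}\sum_{i=1}^d \sigma_i^2 .
\]
Both sides are nonnegative, so raising to the power $d/2$ preserves the inequality. Combining this with $\prod_i \sigma_i = \bigl(\prod_i \sigma_i^2\bigr)^{1/2}$, which holds because each $\sigma_i \ge 0$, gives
\[
|\det \Phi(1,t;x_t)| \le \Bigl(\tfrac{1}{d}\sum_i \sigma_i^2(t;x_t)\Bigr)^{d/2}.
\]
The right-hand side equals $\bigl(\tfrac{1}{d}\|\Phi\|_F^2\bigr)^{d/2}$, i.e.\ the pointwise integrand of $g(t)^2$ in Proposition~\ref{prop:rms}, raised to the power $d/2$.

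I expect no step to be a real obstacle. The only points needing care are the degenerate case where some $\sigma_i = 0$, in which the left side is zero and the bound is trivial, and the monotonicity of $y \mapsto y^{d/2}$ on $[0,\infty)$. I would also note that equality holds iff all $\sigma_i$ coincide, i.e.\ $\Phi$ is a scalar multiple of an orthogonal matrix; the identity transport $\Phi = I$ from the discussion after Definition~\ref{def:ftss} is one such case.
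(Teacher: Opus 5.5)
Your proposal is correct and is essentially the paper's own proof: write $|\det \Phi(1,t;x_t)| = \bigl(\prod_{i=1}^d \sigma_i^2\bigr)^{1/2}$, apply AM--GM to the squared singular values, and raise both sides to the power $d/2$. Your remarks on the degenerate case $\sigma_i = 0$ and on the equality condition (all $\sigma_i$ equal, i.e.\ $\Phi$ a scalar multiple of an orthogonal matrix) are correct additions that the paper does not spell out.
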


\begin{proof}
Let $\sigma_1, \dots, \sigma_d$ be the singular values of $\Phi(1,t; x_t)$ for a fixed trajectory starting at $x_t$. By applying the arithmetic-geometric mean (AM-GM) inequality to the squared singular values, we obtain:
\[
|\det \Phi(1,t; x_t)| = \left(\prod_{i=1}^d \sigma_i^2(t; x_t)\right)^{1/2}
\le \left(\frac{1}{d} \sum_{i=1}^d \sigma_i^2(t; x_t)\right)^{d/2}.
\]
\end{proof}

This pointwise structural coupling allows us to globally bound the differential entropy transformation of the generated data distribution by applying Jensen's inequality to the logarithmic volume.

\begin{proposition}[Entropy-Compression Bound]
\label{prop:entropy_bound}
Let $\phi_{0\to 1}: \mathbb{R}^d \to \mathbb{R}^d$ be a smooth, diffeomorphic transport map pushing forward the base latent distribution $X \sim p_0$ to the target distribution $Y \sim p_1$. The differential entropy $h(Y)$ satisfies the upper bound:
\begin{equation}
h(Y) \le h(X) + d \log g(0).
\end{equation}
\end{proposition}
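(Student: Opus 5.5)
The plan is to use the change-of-variables formula for differential entropy under a diffeomorphism, then bound the log-determinant with Proposition~\ref{prop:detbound} and two applications of Jensen's inequality. Since $\phi_{0\to 1}$ is a smooth diffeomorphism with $Y = \phi_{0\to 1}(X)$, the density of $Y$ is $p_1(\phi_{0\to1}(x)) = p_0(x)/|\det \Phi(1,0;x)|$. This gives the standard identity
\[
h(Y) = h(X) + \mathbb{E}_{X\sim p_0}\bigl[\log|\det \Phi(1,0;X)|\bigr].
\]
We assume the entropies are finite and the expectation is well defined, so the identity holds rather than being $\infty-\infty$. Everything then reduces to showing $\mathbb{E}_{x_0}[\log|\det\Phi(1,0;x_0)|] \le d\log g(0)$.

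\textbf{Pointwise bound.} Take logarithms in Proposition~\ref{prop:detbound} at $t=0$. Writing $s(x_0) = \frac{1}{d}\sum_i \sigma_i^2(0;x_0) = \frac1d\|\Phi(1,0;x_0)\|_F^2$, we get
\[
\log|\det\Phi(1,0;x_0)| \le \tfrac{d}{2}\log s(x_0).
\]
\textbf{First Jensen step.} Since $\log$ is concave, $\mathbb{E}[\tfrac d2 \log s] \le \tfrac d2\log \mathbb{E}[s]$. By Proposition~\ref{prop:rms}, $\mathbb{E}[s]$ is the expected mean-square singular value. This yields
\[
h(Y)\le h(X) + \tfrac d2\log \mathbb{E}_{x_0}\Bigl[\tfrac1d\|\Phi(1,0;x_0)\|_F^2\Bigr].
\]

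\textbf{Main obstacle.} We must connect this to $g(0)$ as defined. Definition~\ref{def:ftss} makes $g$ an average of the first moment $\mathbb{E}_u\|\Phi u\|$, and Proposition~\ref{prop:rms} identifies it with $\frac{1}{\sqrt d}\mathbb{E}_{x_t}\|\Phi\|_F$. Its squared form, however, reads as the expected second moment. These agree only if one adopts the RMS convention $g(t)^2 = \frac1d\mathbb{E}_{x_t}\|\Phi\|_F^2$. Under the first-moment reading, Jensen gives $\mathbb{E}\|\Phi\|_F \le (\mathbb{E}\|\Phi\|_F^2)^{1/2}$, and that inequality points the wrong way for our purpose.

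The cleanest route is to avoid the issue by applying concavity to $\log\|\Phi\|_F$ directly instead of to its square. Pointwise,
\[
\tfrac d2\log s(x_0) = d\log\bigl(\tfrac{1}{\sqrt d}\|\Phi(1,0;x_0)\|_F\bigr).
\]
Jensen for the concave $\log$ then gives
\[
\mathbb{E}\Bigl[d\log\bigl(\tfrac{1}{\sqrt d}\|\Phi\|_F\bigr)\Bigr] \le d\log \mathbb{E}\Bigl[\tfrac1{\sqrt d}\|\Phi\|_F\Bigr] = d\log g(0),
\]
where the final equality is the first identity of Proposition~\ref{prop:rms}. This version works with $g$ exactly as defined, so it is the one I would use. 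Combined with the change-of-variables identity, it gives $h(Y)\le h(X)+d\log g(0)$.

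The remaining technical point is justifying that the limit and the spherical expectation in Definition~\ref{def:ftss} commute, so that $g(0)$ really equals $\frac{1}{\sqrt d}\mathbb{E}\|\Phi\|_F$. I would treat this as already covered by Proposition~\ref{prop:rms}, using the smoothness of the flow.
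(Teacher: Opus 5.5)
Your skeleton matches the paper's: change of variables, Proposition~\ref{prop:detbound} at $t=0$, one Jensen step, then identification with $g(0)$ through Proposition~\ref{prop:rms}. The real difference is where you apply concavity. The paper applies it to $\log\bigl(\tfrac1d\|\Phi(1,0;X)\|_F^2\bigr)$ and then uses the second identity of Proposition~\ref{prop:rms}, $g(0)^2=\tfrac1d\mathbb{E}\|\Phi\|_F^2$. You apply it to $\log\bigl(\tfrac{1}{\sqrt d}\|\Phi\|_F\bigr)$ and use the first identity instead. Taking Proposition~\ref{prop:rms} as given, your argument is valid and gives the sharper bound. Since $\mathbb{E}\|\Phi\|_F\le(\mathbb{E}\|\Phi\|_F^2)^{1/2}$, your inequality implies the paper's. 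You are also right that the two identities in Proposition~\ref{prop:rms} cannot both hold unless $\|\Phi(1,0;X)\|_F$ is almost surely constant.

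However, your claim that this route ``works with $g$ exactly as defined'' is not established. The obstacle is also not the interchange of the limit with $\mathbb{E}_u$. That interchange is harmless: by $C^1$ smoothness the difference quotient converges uniformly on the compact sphere to $\|\Phi u\|$. After the interchange, Definition~\ref{def:ftss} gives $g(0)=\mathbb{E}_X\,\mathbb{E}_u\|\Phi u\|$, which is a first moment in $u$. By contrast, $\tfrac{1}{\sqrt d}\|\Phi\|_F=(\mathbb{E}_u\|\Phi u\|^2)^{1/2}$ is the RMS in $u$. Cauchy--Schwarz gives $\mathbb{E}_u\|\Phi u\|\le\tfrac{1}{\sqrt d}\|\Phi\|_F$, strictly unless $\Phi^\top\Phi$ is a multiple of $I$. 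So for the literal definition the first identity of Proposition~\ref{prop:rms} is only an inequality, and it points the wrong way for your purpose, just as the second one does.

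To get the statement for $g$ as literally defined, replace the Frobenius step with a pointwise bound in terms of $\mathbb{E}_u\|Au\|$.
\begin{itemize}
\item Write $A=\Phi(1,0;x)=U\Sigma V^\top$. Then $\|Au\|^2=\sum_i\sigma_i^2w_i^2$, where $w=V^\top u$ is uniform on $S^{d-1}$.
\item Concavity of $\log$ with weights $w_i^2$ (which sum to $1$) gives $\log\|Au\|^2\ge\sum_i w_i^2\log\sigma_i^2$.
\item Using $\mathbb{E}\,w_i^2=1/d$ and then Jensen in $u$, you get $\tfrac1d\log|\det A|\le\mathbb{E}_u\log\|Au\|\le\log\mathbb{E}_u\|Au\|$.
\end{itemize}
One Jensen step over $X$ then gives $\mathbb{E}_X\log|\det\Phi|\le d\log\mathbb{E}_X\mathbb{E}_u\|\Phi u\|=d\log g(0)$. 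This is the strongest of the three readings of $g$, and it implies both your bound and the paper's.
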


\begin{proof}
By the standard change-of-variables formula for probability densities under smooth invertible transformations, the target distribution differential entropy expands as:
\[
h(Y) = h(X) + \mathbb{E}_{X}\bigl[\log |\det D\phi_{0\to 1}(X)|\bigr].
\]
Taking the logarithm of the pointwise determinant bound from Proposition~\ref{prop:detbound} at $t=0$ yields:
\[
\log |\det D\phi_{0\to 1}(X)| \le \frac{d}{2} \log\left(\frac{1}{d} \sum_{i=1}^d \sigma_i^2(0; X)\right).
\]
Taking the expectation over $X \sim p_0$ and applying Jensen's inequality for the concave logarithm:
\begin{align*}
\mathbb{E}_{X}\bigl[\log |\det D\phi_{0\to 1}(X)|\bigr]
&\le \frac{d}{2} \mathbb{E}_{X}\left[\log\left(\frac{1}{d} \sum_{i=1}^d \sigma_i^2(0; X)\right)\right] \\
&\le \frac{d}{2} \log\left(\mathbb{E}_{X}\left[\frac{1}{d} \sum_{i=1}^d \sigma_i^2(0; X)\right]\right).
\end{align*}

By Proposition~\ref{prop:rms}, the inner expectation evaluates exactly to the squared global FTSS $g(0)^2$. Thus,
\[
\frac{d}{2} \log\bigl(g(0)^2\bigr) = d \log g(0).
\]
Substituting this bound back into the entropy expression completes the proof.
\end{proof}

The entropy compression bound established in Proposition~\ref{prop:entropy_bound} demonstrates that a drop in the global FTSS $g(0)$ fundamentally restricts the information capacity of the generated distribution.

\subsection{Algorithmic Implementation}
\label{sec:estimation}

Evaluating the state-transition matrix $\Phi(1,t; x_t)$ directly via automatic differentiation requires $d$ backward passes to construct a dense Jacobian matrix, which is computationally prohibitive for high-dimensional models. To bypass this, we directly instantiate Definition~\ref{def:ftss} using a parallelized, gradient-free Monte Carlo estimator based on directional finite differences. As detailed in Algorithm~\ref{alg:gain}, we approximate the global FTSS by injecting isotropic perturbations $\varepsilon U$ into cached intermediate states, mapping them to the target terminal space via forward-pass trajectory simulations, and computing the RMS gain. The estimator involves two approximations: a finite step size $\varepsilon$ for the directional derivative, and a finite number of trajectories $K$ for the expectation over $x_t$. As $\varepsilon \to 0$ and $K \to \infty$, the estimator is consistent and converges to the true FTSS $g(t)$.

\begin{algorithm}[htbp]
\caption{Parallel Gradient-Free Estimation of FTSS}
\label{alg:gain}
\begin{algorithmic}[1]
\Require Velocity field $v_\theta$, initial states $X_0 \in \mathbb{R}^{K \times d}$, grid $\{t_j\}_{j=1}^T$, step $\varepsilon$.
\Ensure Empirical FTSS values $\{\hat{g}(t_j)\}_{j=1}^T$.
\State Compute $X_1 = \phi_{0 \to 1}(X_0)$ and cache intermediate states $\{X_{t_j}\}_{j=1}^T$
\For{$j = 1$ \textbf{to} $T$}
    \For{$k = 1$ \textbf{to} $K$}
        \State Sample $u \sim \mathrm{Unif}(S^{d-1})$
        \State $\delta = \varepsilon \cdot u$
        \State $X_1' = \phi_{t_j \to 1}(X_{t_j}^{(k)} + \delta)$
        \State $r_{k,j} = \|X_1' - X_1^{(k)}\| / \varepsilon$
    \EndFor
    \State $\hat{g}(t_j) = \left( \frac{1}{K} \sum_{k=1}^K r_{k,j}^2 \right)^{1/2}$
\EndFor
\State \Return $\{\hat{g}(t_j)\}_{j=1}^{T}$
\end{algorithmic}
\end{algorithm}

\section{Auditing Memorization via Spectral Sensitivity}
\label{sec:bottleneck_framework}

We analyze how generative velocity fields behave across different training data sizes, model architectures, and datasets. Our main finding is that the Spectral Collapse Ratio $M := g_{\min} / g_{\min}^{\text{full}}$, where $g_{\min} := \min_{t \in [0,1]} g(t)$, characterizes whether a continuous-time model memorizes or generalizes. Here, $g_{\min}^{\text{full}}$ represents the reference baseline minimum of a healthy model trained on the full dataset.

The memorization experiment is designed around data scarcity. Data scarcity drives this geometric change by altering the nature of the learned target distribution. In the low-data regime where the model memorizes a small pool of $N$ training samples, the target distribution turns into a set of isolated, discrete points. To reach these sparse points, the velocity field pinches the high-dimensional flow of paths into a lower dimension. This loss of trajectory volume causes the FTSS to drop at the bottleneck, making $g_{\min}$ a direct indicator of memorization.

\subsection{Finite-Time Sensitivity and Bottleneck Dynamics}
\label{sec:bottleneck_dynamics}

To see how $g(t)$ behaves in real models, we plot FTSS curves across different training data sizes, model architectures, and datasets (see Figure~\ref{fig:crossmethod} in Appendix~\ref{app:implementation}). We observe two distinct phases during the trajectory:
\begin{itemize}
    \item \textbf{Early Stage ($t \to 0$):} The early stage varies across architectures and datasets. The FTSS can go above 1 ($g(t) > 1$) because the model pushes paths outward to match the scale of the target data.
    \item \textbf{Late Stage ($t \to 1$):} The paths shrink ($g(t) < 1$) as they come close to their target.
\end{itemize}

Because the initialization phase is heavily influenced by the global scale of the data, utilizing the bottleneck minimum $g_{\min}$ isolates the intrinsic compression occurring within the flow. Geometrically, this trajectory pinching represents a reduction in the effective dimensionality of the system's tangent space. We can formally characterize this structural degeneration by introducing a continuous relaxation of matrix rank.

\begin{remark}[Stable Rank and Memorization]\label{rm:srank}
The stable rank $\mathrm{srank}(W) := \|W\|_F^2 / \|W\|_2^2$ provides a continuous relaxation of the matrix rank \cite{vershynin2012introduction, bartlett2017spectrally}. As established in Proposition~\ref{prop:rms}, the squared FTSS $g(t)^2$ captures the expected Frobenius norm of the state-transition matrix. Given that its spectral norm is $\sigma_{\max}(t)^2$, the effective dimensionality along the generation path follows $\mathrm{srank}(t) \propto g(t)^2 / \sigma_{\max}(t)^2$.

Because both the generalizing and memorizing models map the identical isotropic prior to the same macroscopic target domain, the global transport distance is conserved. This geometric constraint forces the primary transport scale to remain stable ($\sigma_{\max} \approx \sigma_{\max}^{\text{full}}$). Under this approximation, the spectral norm cancels, and the squared Spectral Collapse Ratio quantifies the proportional rank collapse at the temporal bottleneck:
\begin{equation}
M^2 = \left( \frac{g_{\min}}{g_{\min}^{\text{full}}} \right)^2 \approx \frac{\min_t \mathrm{srank}(t)}{\min_t \mathrm{srank}^{\text{full}}(t)}.
\end{equation}
\end{remark}

\subsection{The Spectral Sensitivity Auditing Framework}
\label{sec:diagnostic}
We formalize this process in Algorithm~\ref{alg:memorization} as an internal geometric audit. The Spectral Collapse Ratio curves for different numbers of training samples are shown in Fig.~\ref{fig:score} in Appendix~\ref{app:implementation}. The results show that the ratio increases monotonically with the number of samples until reaching a threshold. Crucially, the completely non-overlapping variance intervals between extreme data scarcity and the fully resolved data regime ($M \approx 1.0$ at $N \ge 1000$) establish a strict statistical separability. This stark empirical divide validates the ability to detect memorization as an intrinsic structural property, concrete-proofing the theoretical framework by mapping the geometric collapse directly within the generative flow.

\begin{algorithm}[htbp]
\caption{Geometric Memorization Audit via Spectral Sensitivity}
\label{alg:memorization}
\begin{algorithmic}[1]
\Require Model field $v_\theta$, reference baseline $g_{\min}^{\text{full}}$, decision threshold $\tau \in (0,1)$.
\Ensure Audit decision state $\mathcal{H} \in \{\mathcal{H}_0, \mathcal{H}_1\}$.
\State Evaluate temporal trajectory FTSS values $\{\hat{g}(t_j)\}_{j=1}^T$ via Algorithm~\ref{alg:gain}
\State $g_{\min} \gets \min_j \hat{g}(t_j)$
\State $M \gets \frac{g_{\min}}{g_{\min}^{\text{full}}}$
\If{$M < \tau$}
    \State \Return $\mathcal{H}_1$ \Comment{Memorization detected: memorized model}
\Else
    \State \Return $\mathcal{H}_0$ \Comment{Manifold conservation maintained: generalizing model}
\EndIf
\end{algorithmic}
\end{algorithm}

\section{Discussion and Conclusions}
\label{sec:discussion_conclusion}
In this paper, we introduced the FTSS $g(t)$, a scalable, gradient-free geometric metric that measures the average directional sensitivity and volume deformation of continuous-time generative flows. We established that low-data regimes trigger a severe spectral collapse, manifesting as a sharp drop in the bottleneck minimum $g_{\min}$. Utilizing this signature, we developed an internal auditing framework based on the empirical Spectral Collapse Ratio $M = g_{\min} / g_{\min}^{\text{full}}$. This framework successfully detects overfitted continuous-time models directly from their internal flow dynamics, completely eliminating the need for post-hoc data comparisons or training pool access.

Our findings demonstrate that generative memorization is fundamentally an \emph{intrinsic geometric pathology} rather than a purely statistical phenomenon. When constrained by data scarcity, continuous-time velocity fields compress trajectory volume by pinching high-dimensional flows toward isolated, discrete target points. Crucially, because the FTSS $g(t)$ tracks the RMS singular value of the forward state-transition matrix, it exposes this structural contraction as a severe rank collapse at a localized temporal bottleneck ($g_{\min}$). The monotonic relationship between the Spectral Collapse Ratio $M$ and the training sample size $N$ confirms that this dimensional collapse is an invariant signature of data scarcity. From a practical perspective, our parallelized, gradient-free Monte Carlo estimator circumvents the prohibitive $O(d)$ computational bottleneck associated with explicit spatial Jacobian extraction, shifting the operational cost to efficient forward-pass finite differences.

\paragraph{Code availability.}
Code and experimental configurations are publicly available at:
\url{https://github.com/ShuchanWang/ftss-memorization}.

% Print the bibliography
\printbibliography

\appendix

\section{Detailed Experimental Results}
\label{app:implementation}

This appendix provides the complete empirical characterization of our geometric auditing framework. Specifically, Figure~\ref{fig:crossmethod} tracks the full temporal trajectories of the empirical FTSS $g(t)$ across varying training set sizes $N$, mapping how severe data scarcity forces an internal dimensional bottleneck during generation. To aggregate these dynamics into a practical diagnostic tool, Figure~\ref{fig:score} displays the scalar Spectral Collapse Ratio $M$ as a function of $N$, illustrating the clear phase transition from rank-collapsed memorization to manifold-preserving generalization across all tested architectures and datasets.

\begin{figure}[htbp]
    \centering
    \includegraphics[width=0.47\textwidth]{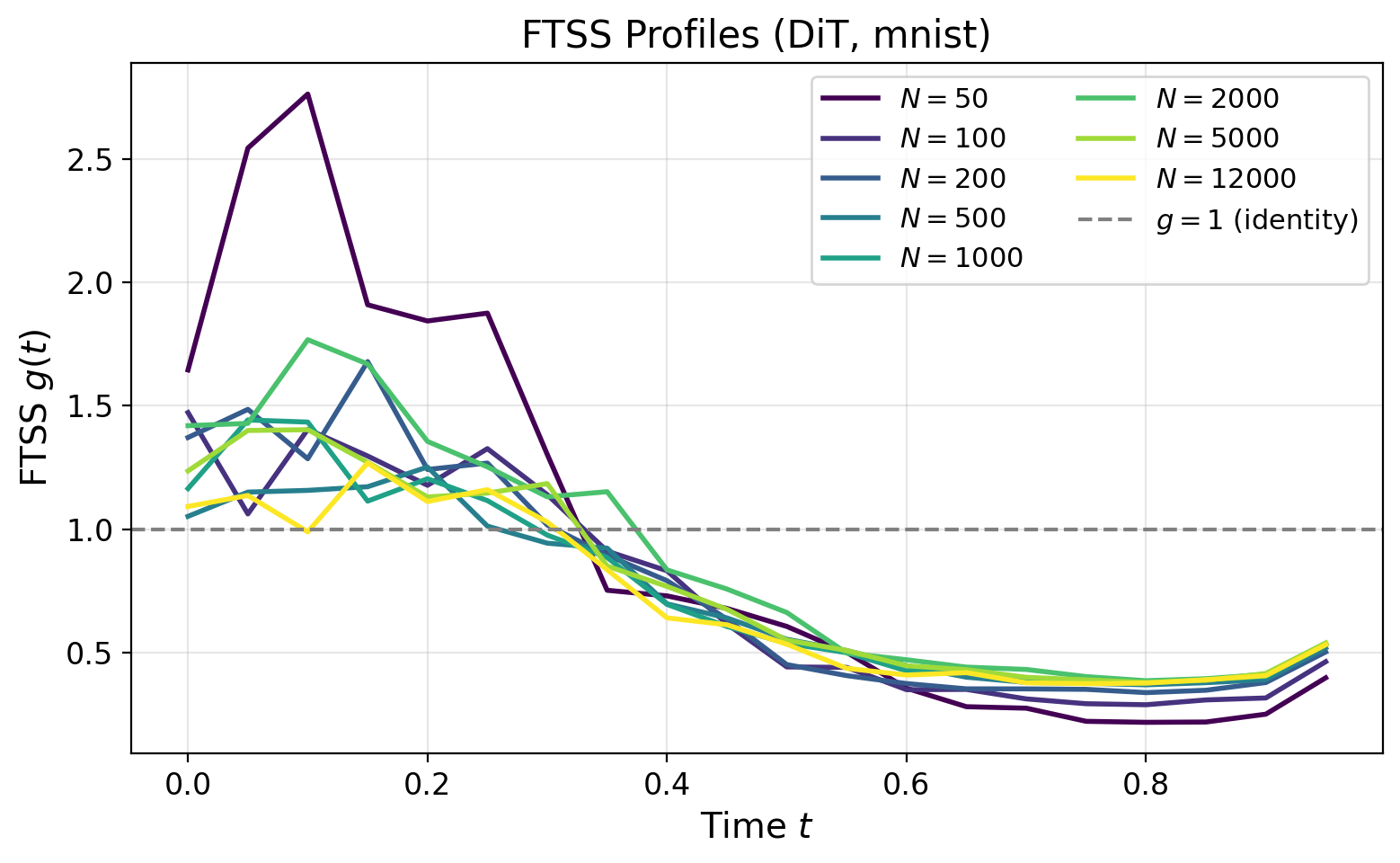}
    \hfill
    \includegraphics[width=0.47\textwidth]{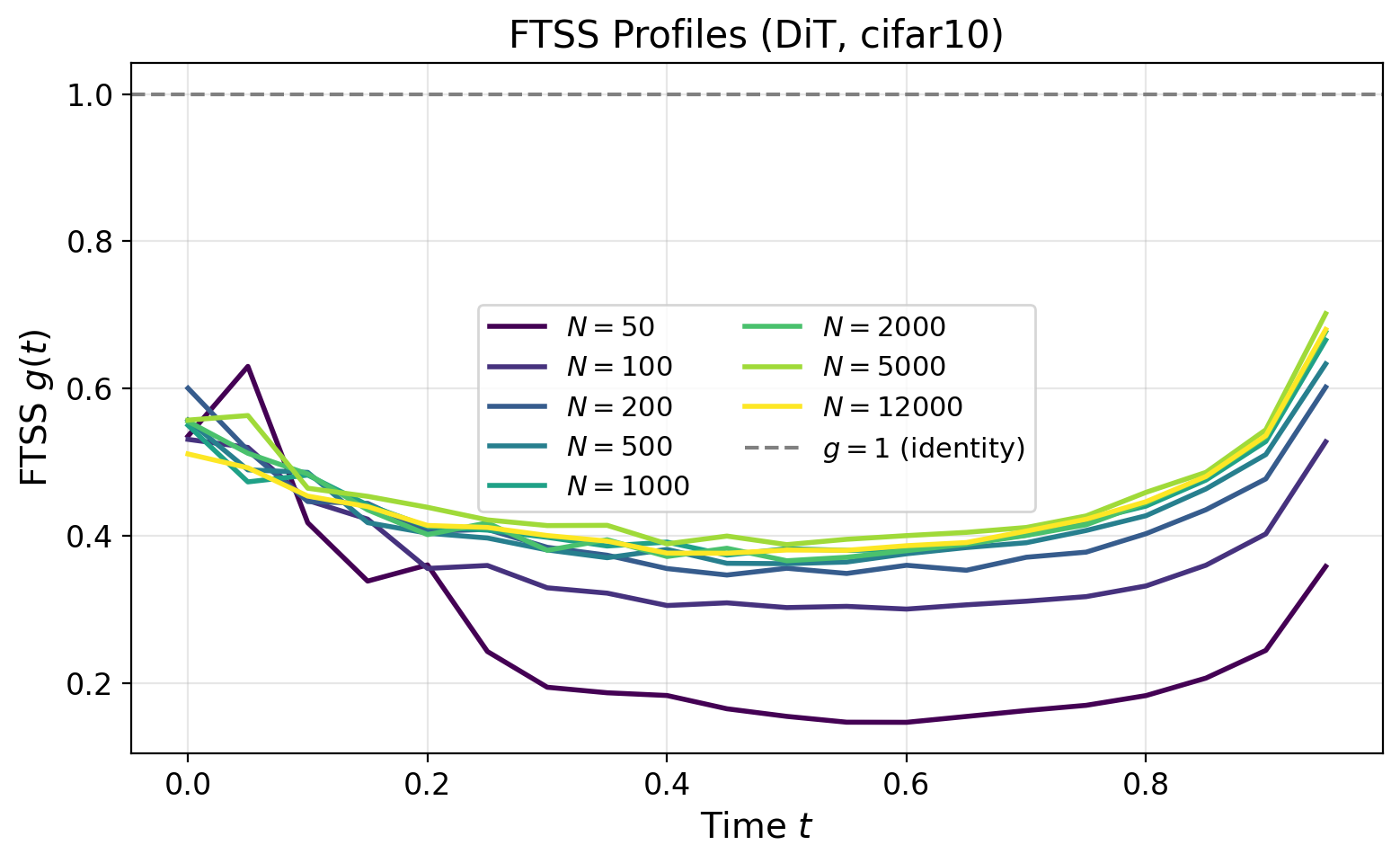}
    \medskip

    \includegraphics[width=0.47\textwidth]{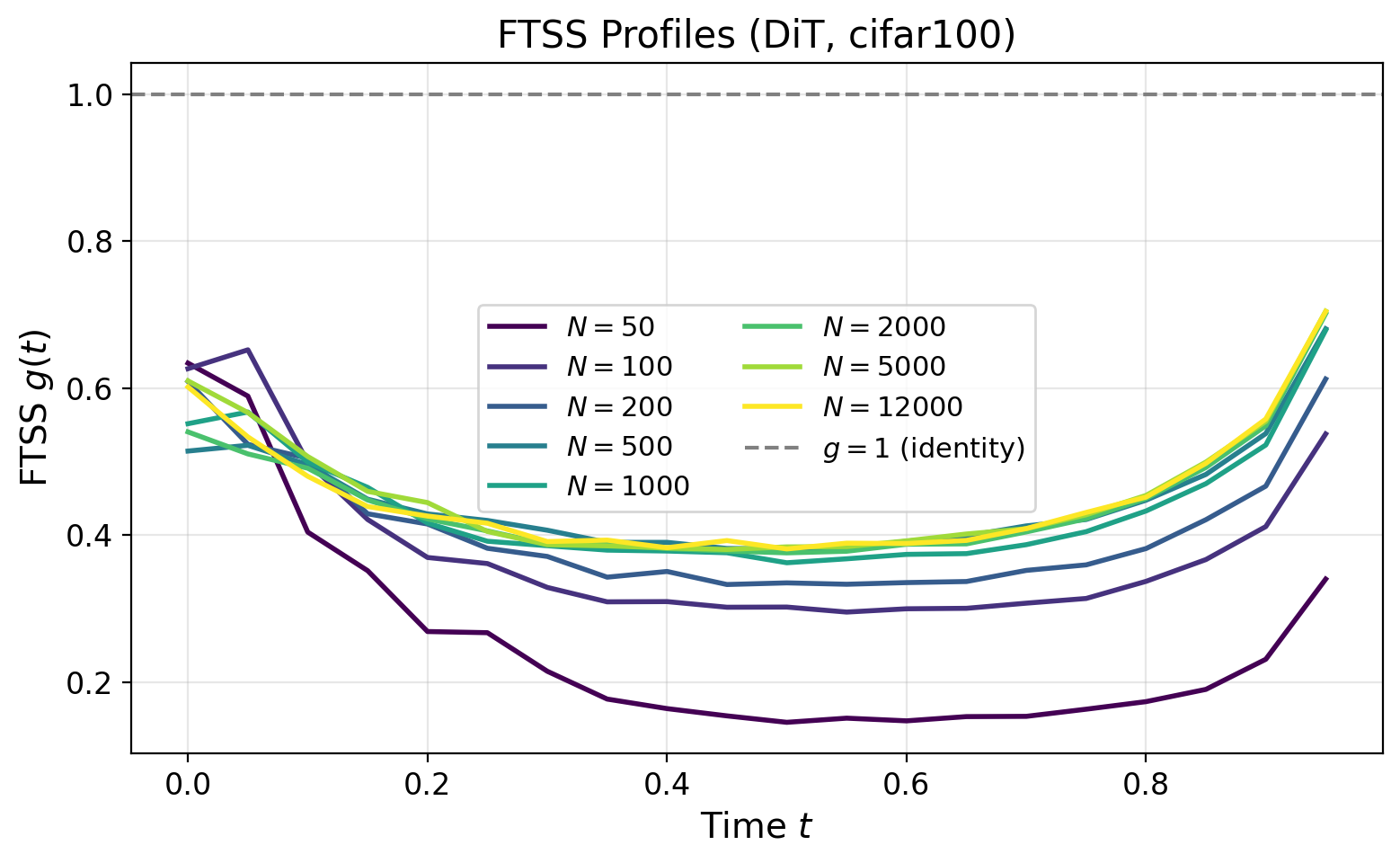}
    \hfill
    \includegraphics[width=0.47\textwidth]{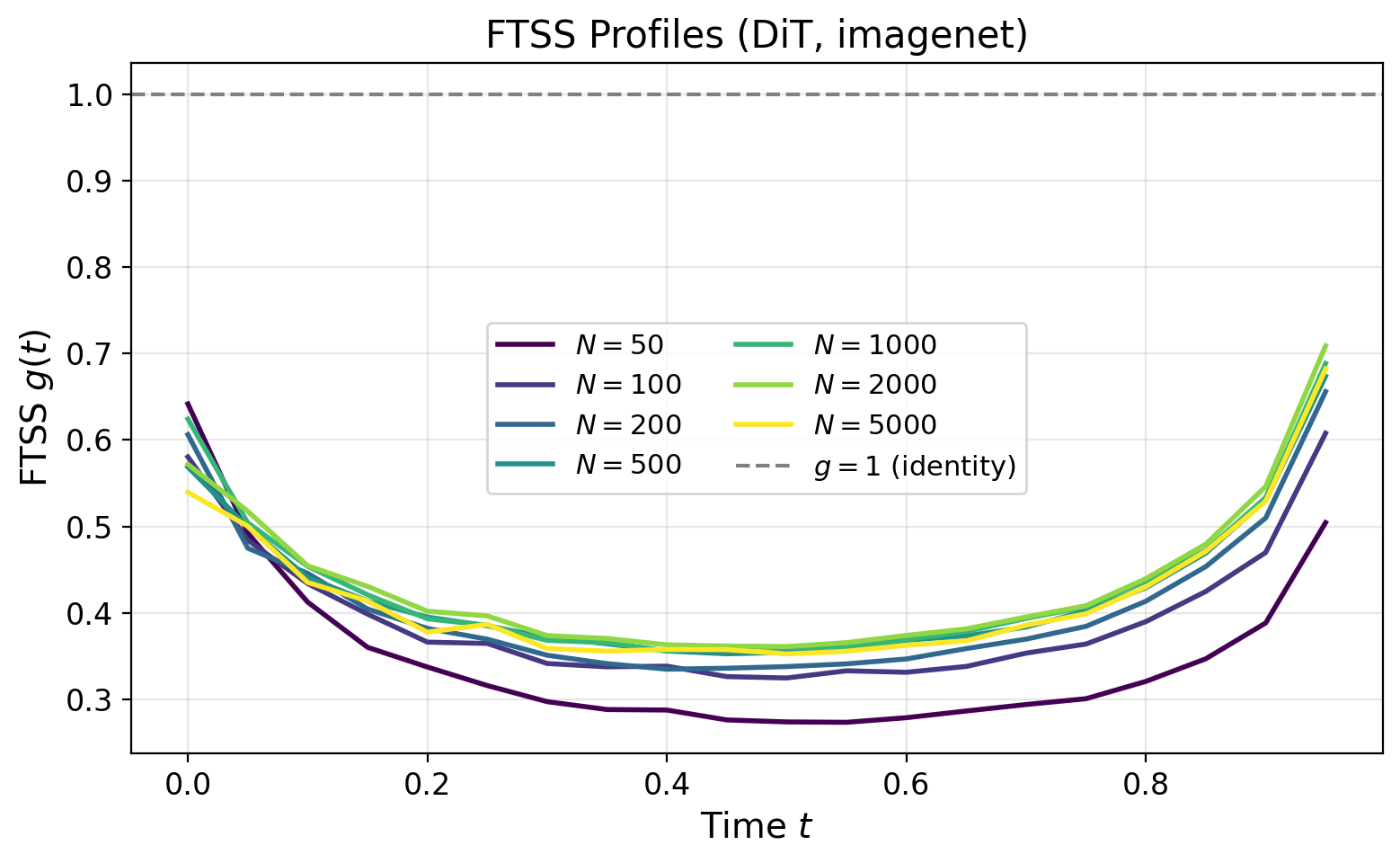}
    \medskip

    \includegraphics[width=0.47\textwidth]{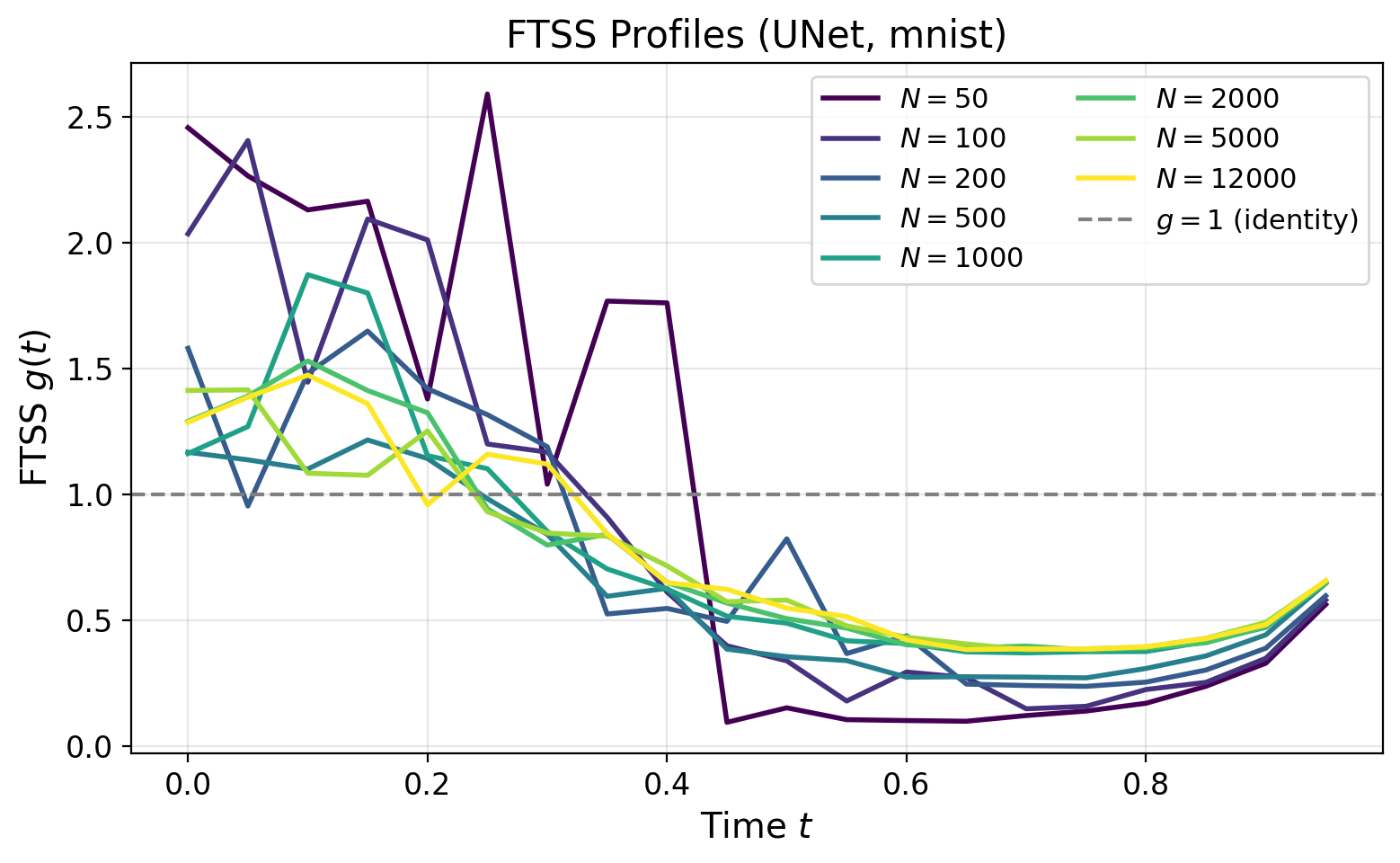}
    \hfill
    \includegraphics[width=0.47\textwidth]{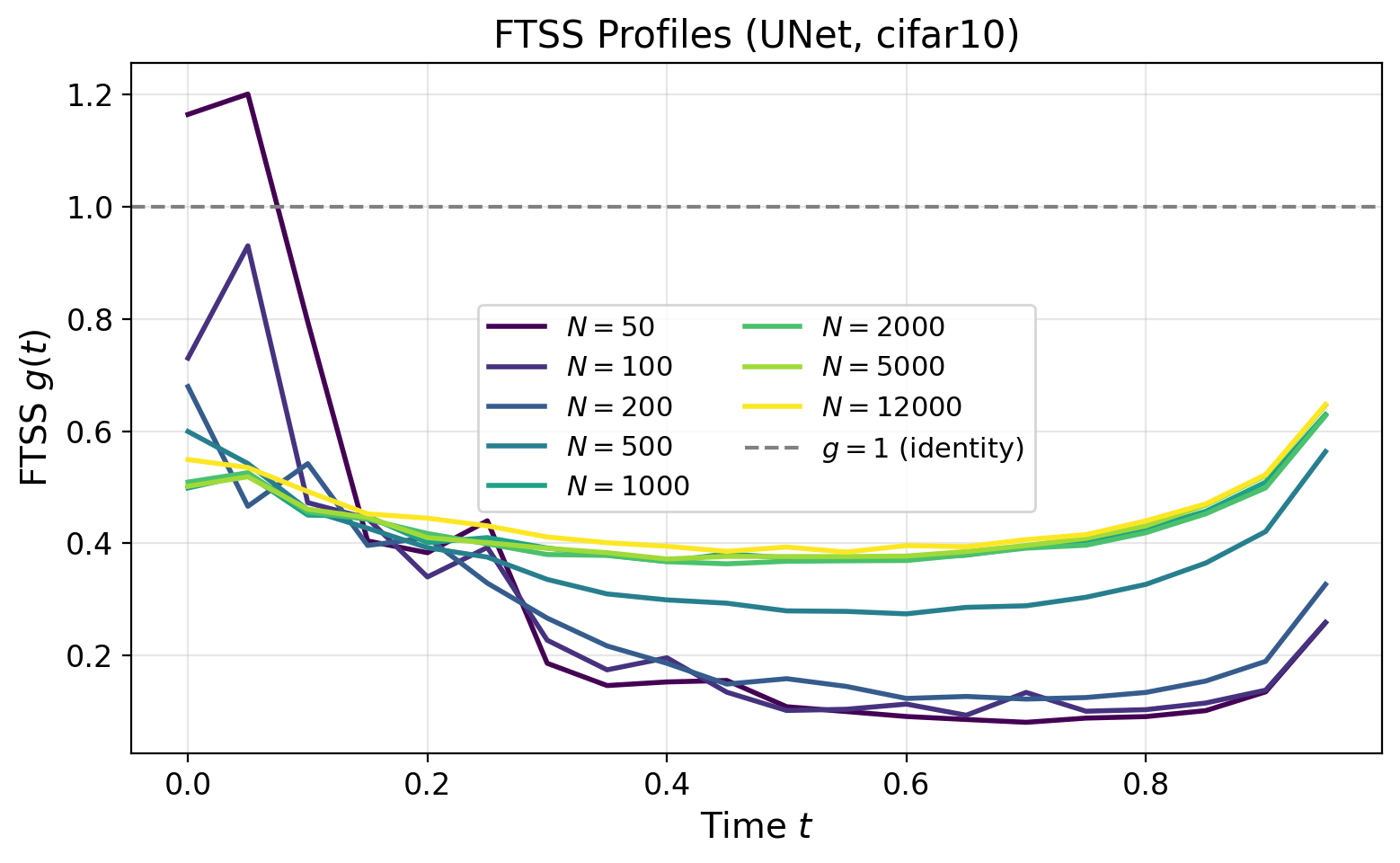}
    \medskip

    \includegraphics[width=0.47\textwidth]{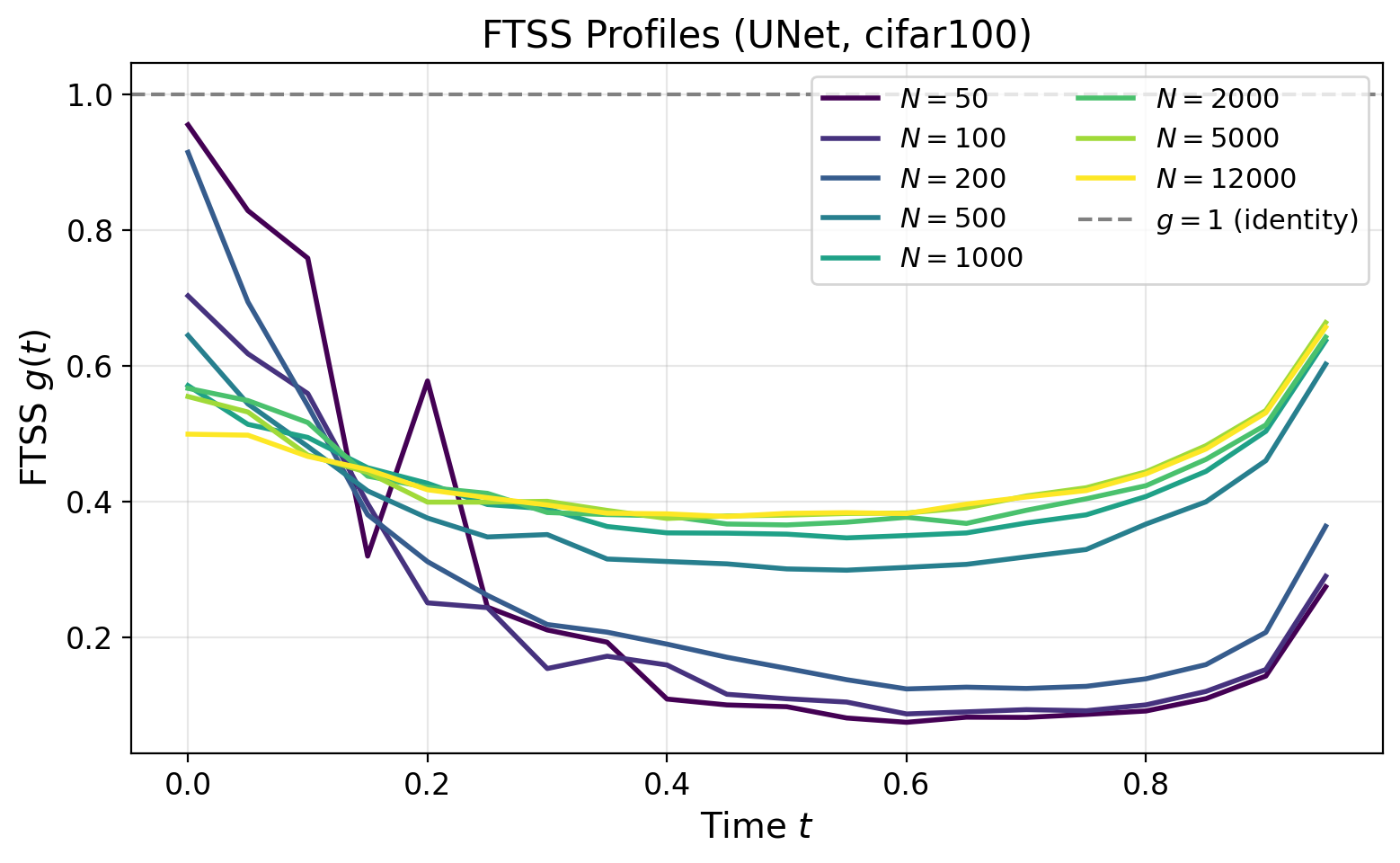}
    \hfill
    \includegraphics[width=0.47\textwidth]{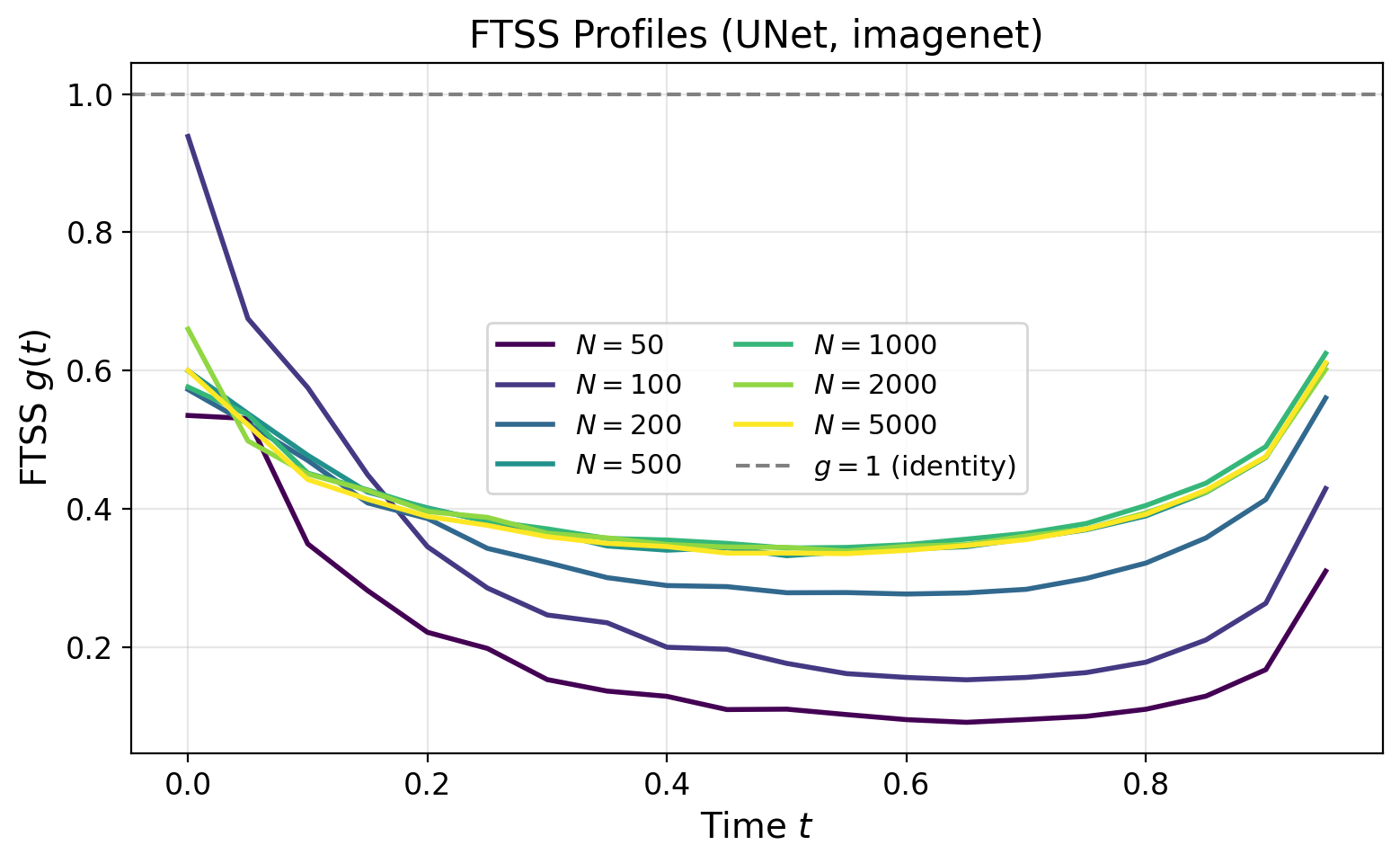}
    \medskip

    \includegraphics[width=0.47\textwidth]{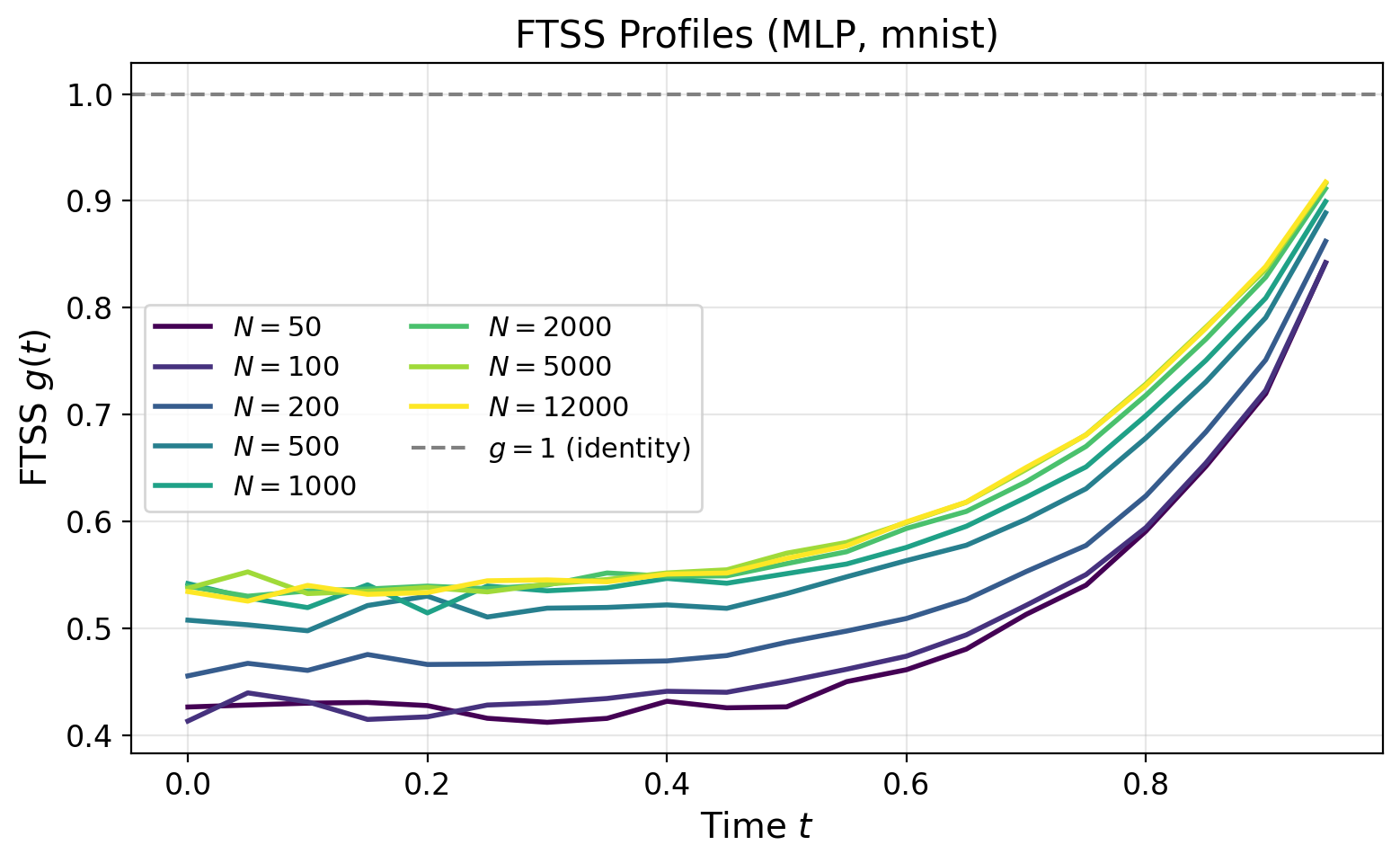}
    \hfill
    \begin{minipage}[b]{0.47\textwidth}\quad\end{minipage} % Invisible spacer to perfectly balance the 9th image
    
    \caption{FTSS curves across model architectures and datasets.}
    \label{fig:crossmethod}
\end{figure}

\begin{figure}[htbp]
    \centering
    \includegraphics[width=0.41\textwidth]{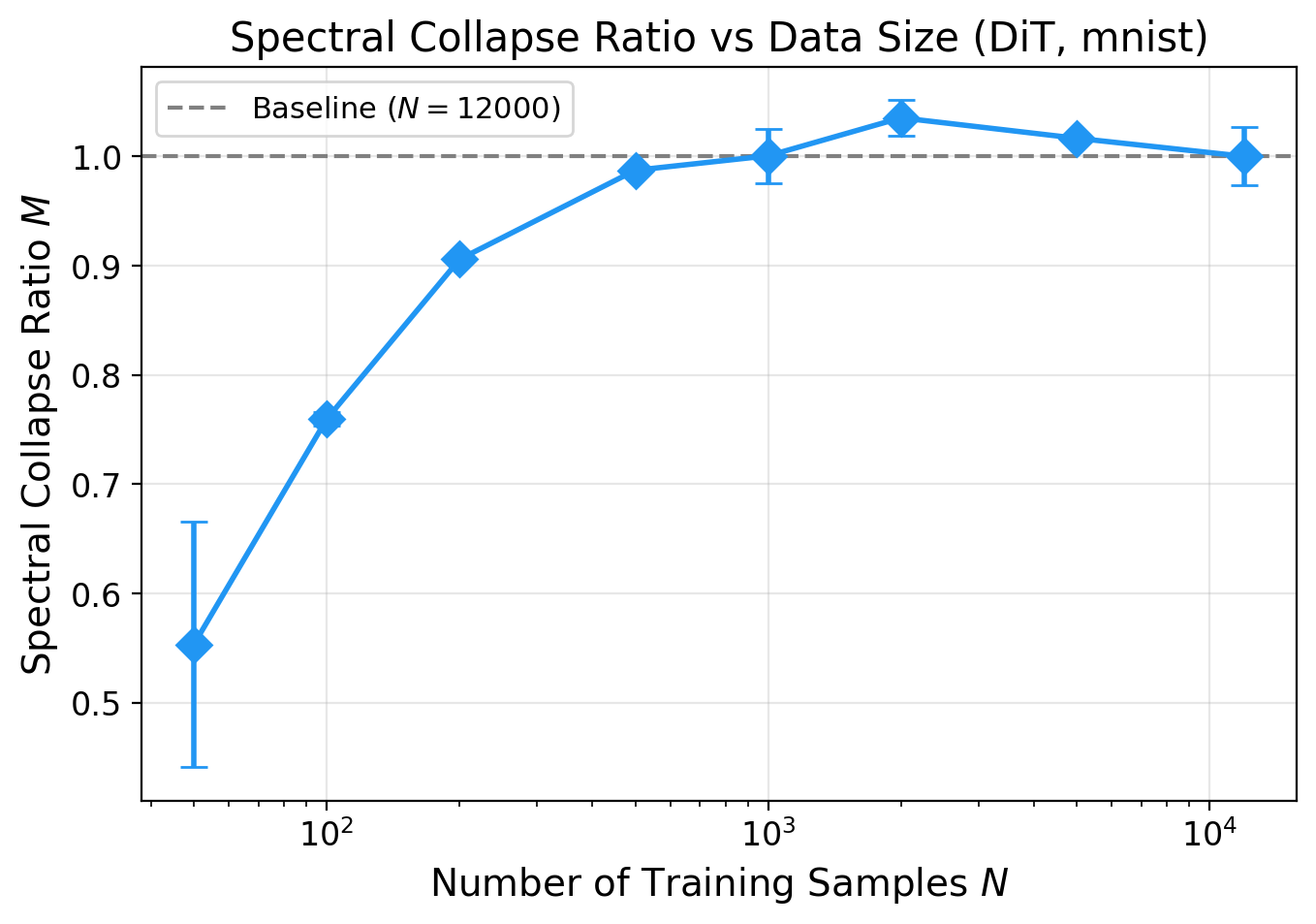}
    \hfill
    \includegraphics[width=0.41\textwidth]{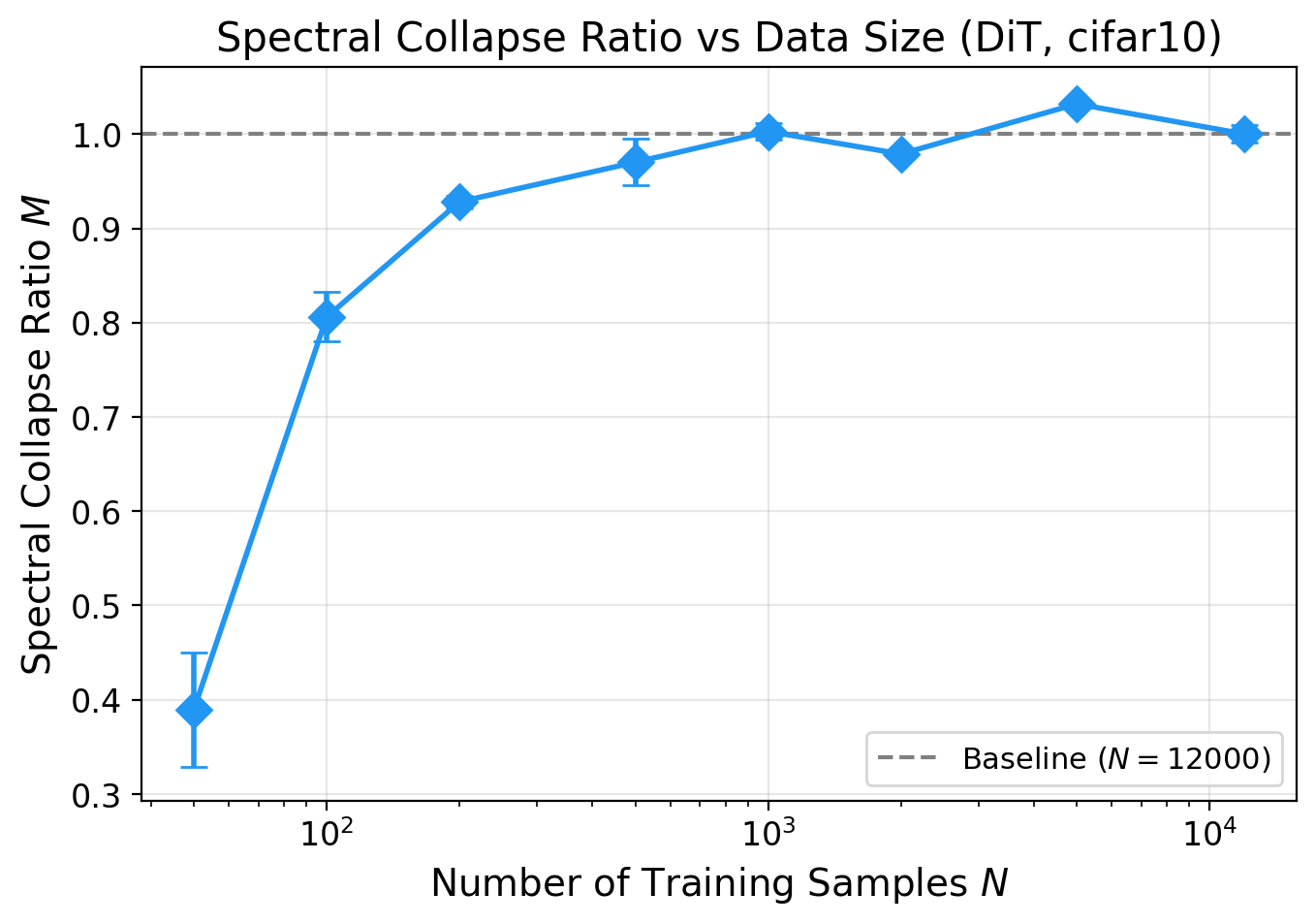}
    \medskip

    \includegraphics[width=0.41\textwidth]{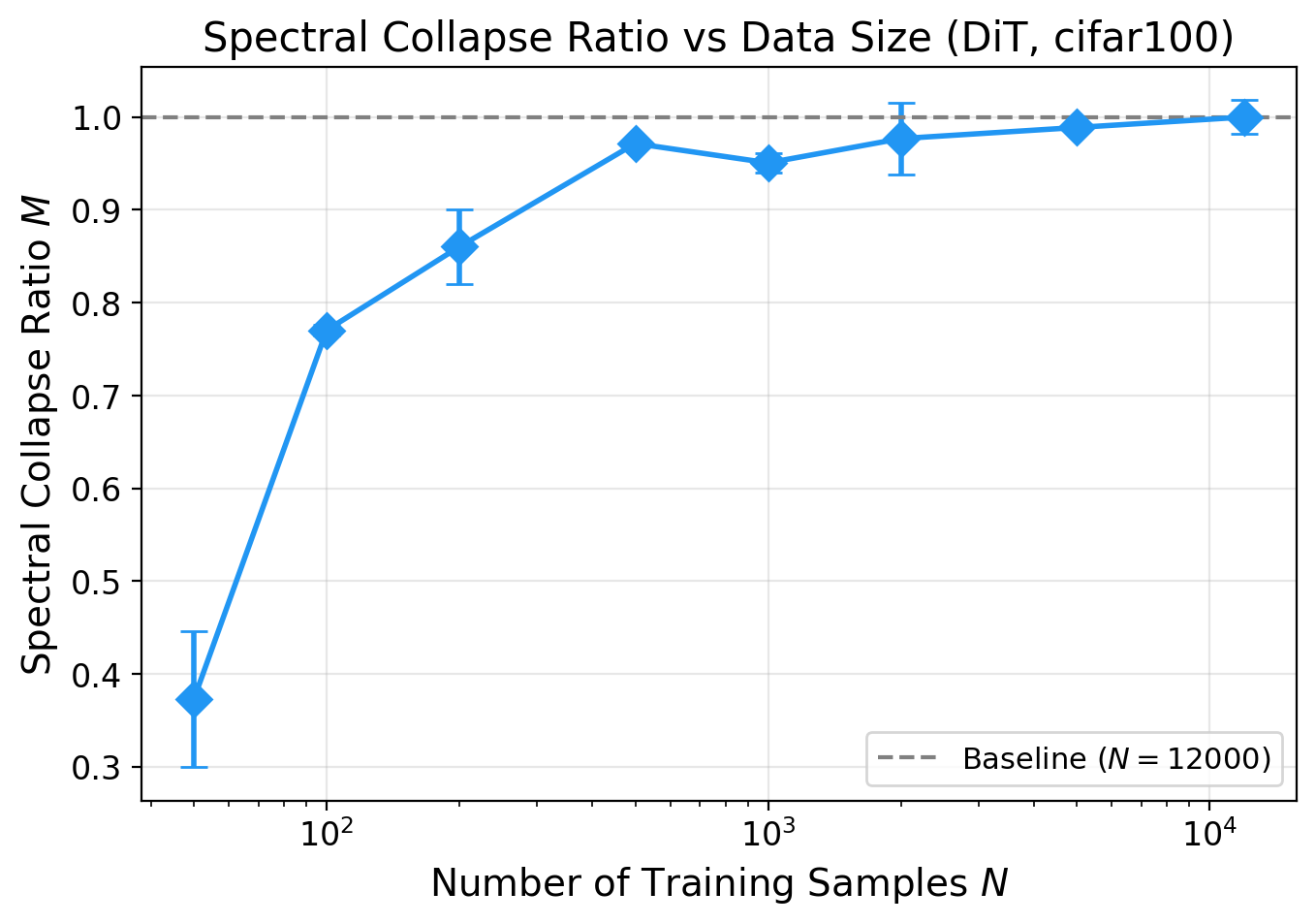}
    \hfill
    \includegraphics[width=0.41\textwidth]{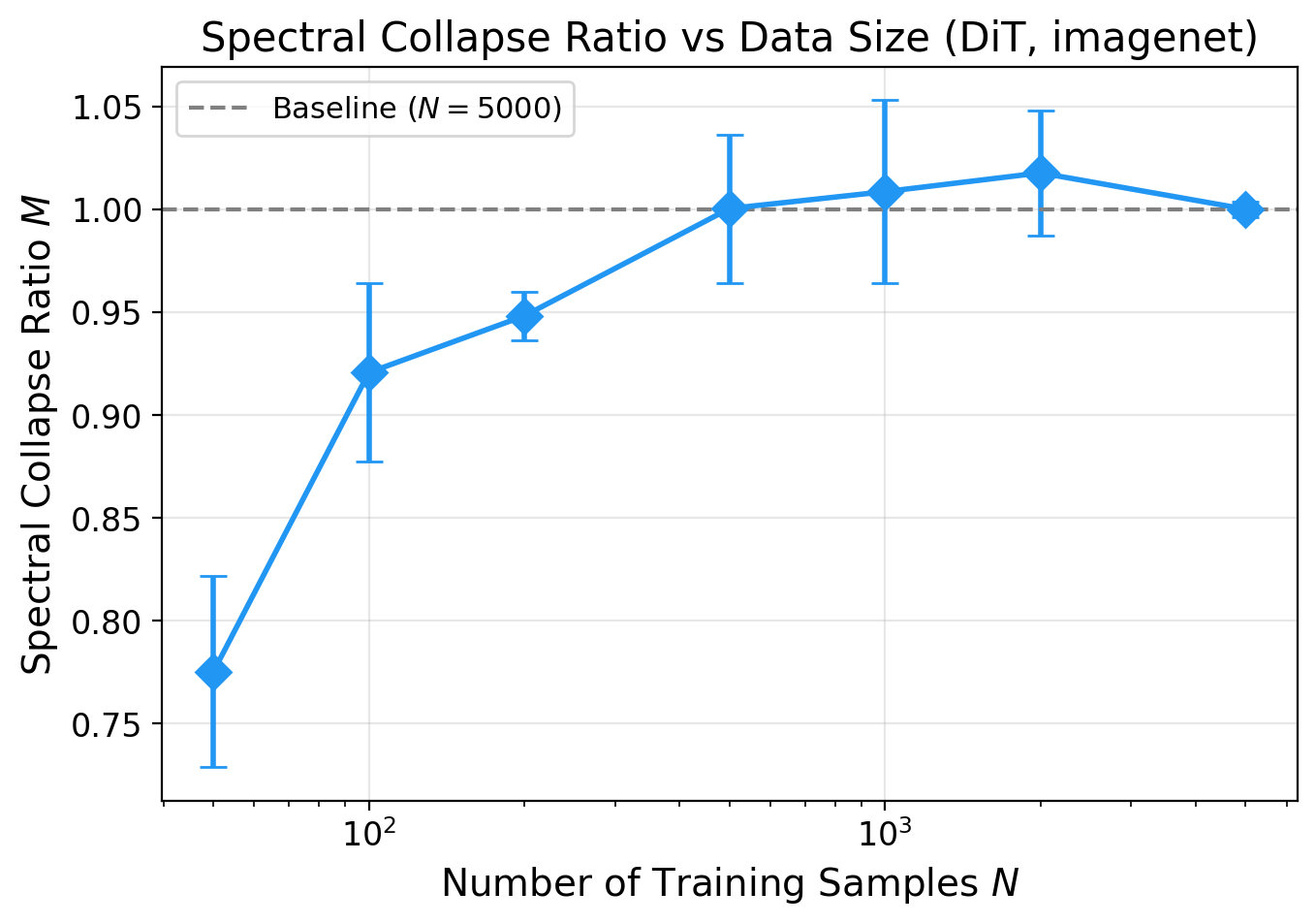}
    \medskip

    \includegraphics[width=0.41\textwidth]{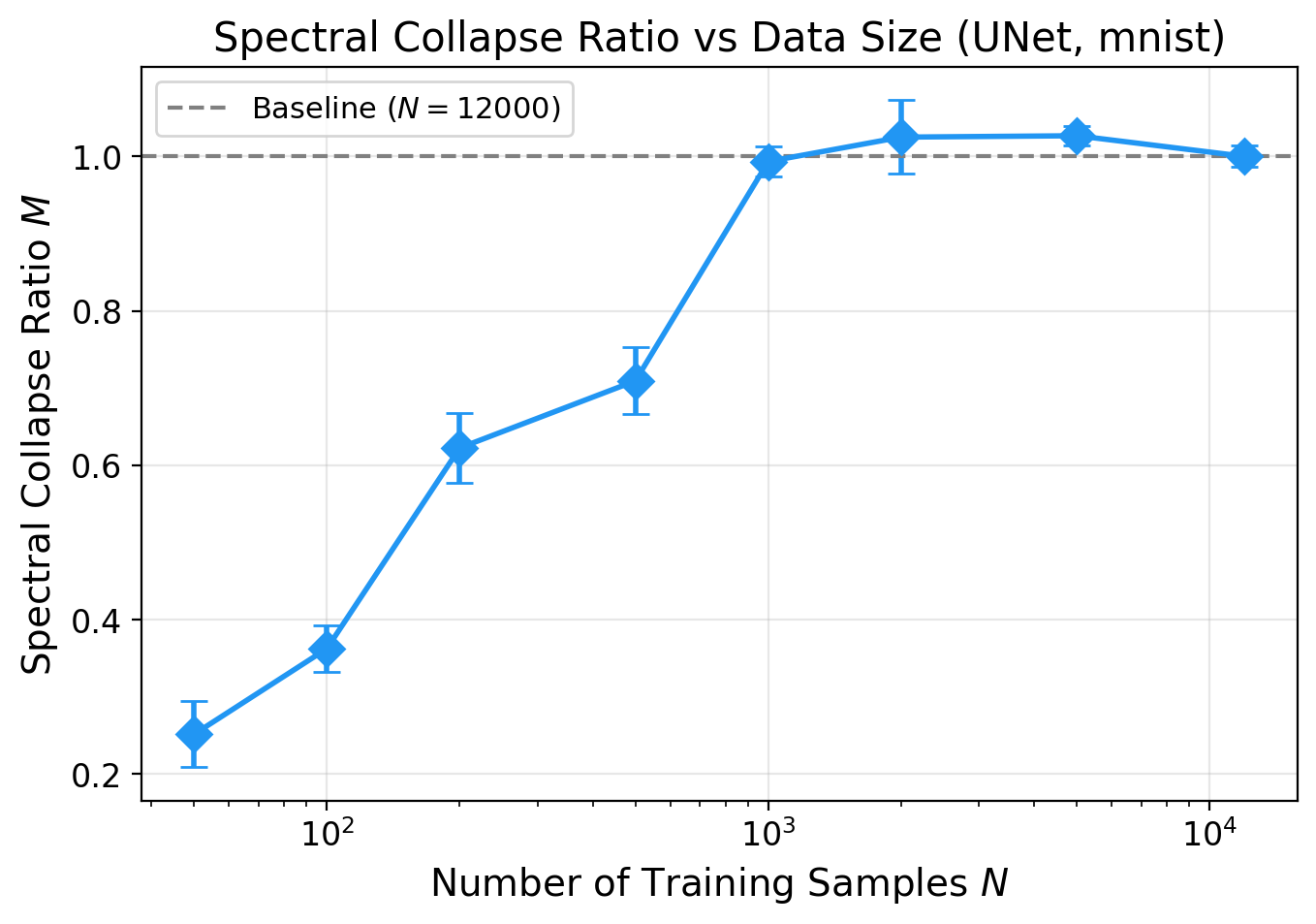}
    \hfill
    \includegraphics[width=0.41\textwidth]{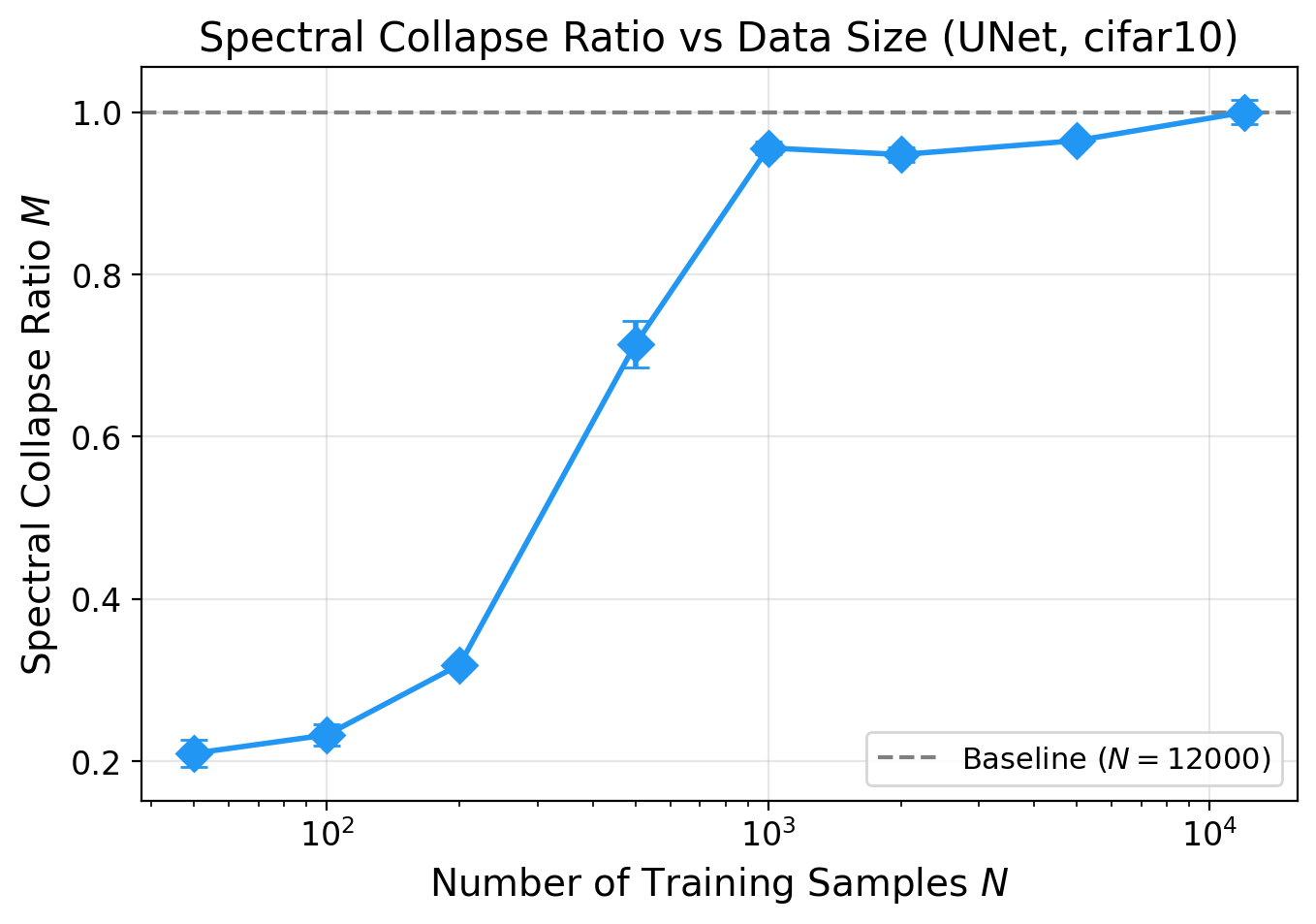}
    \medskip

    \includegraphics[width=0.41\textwidth]{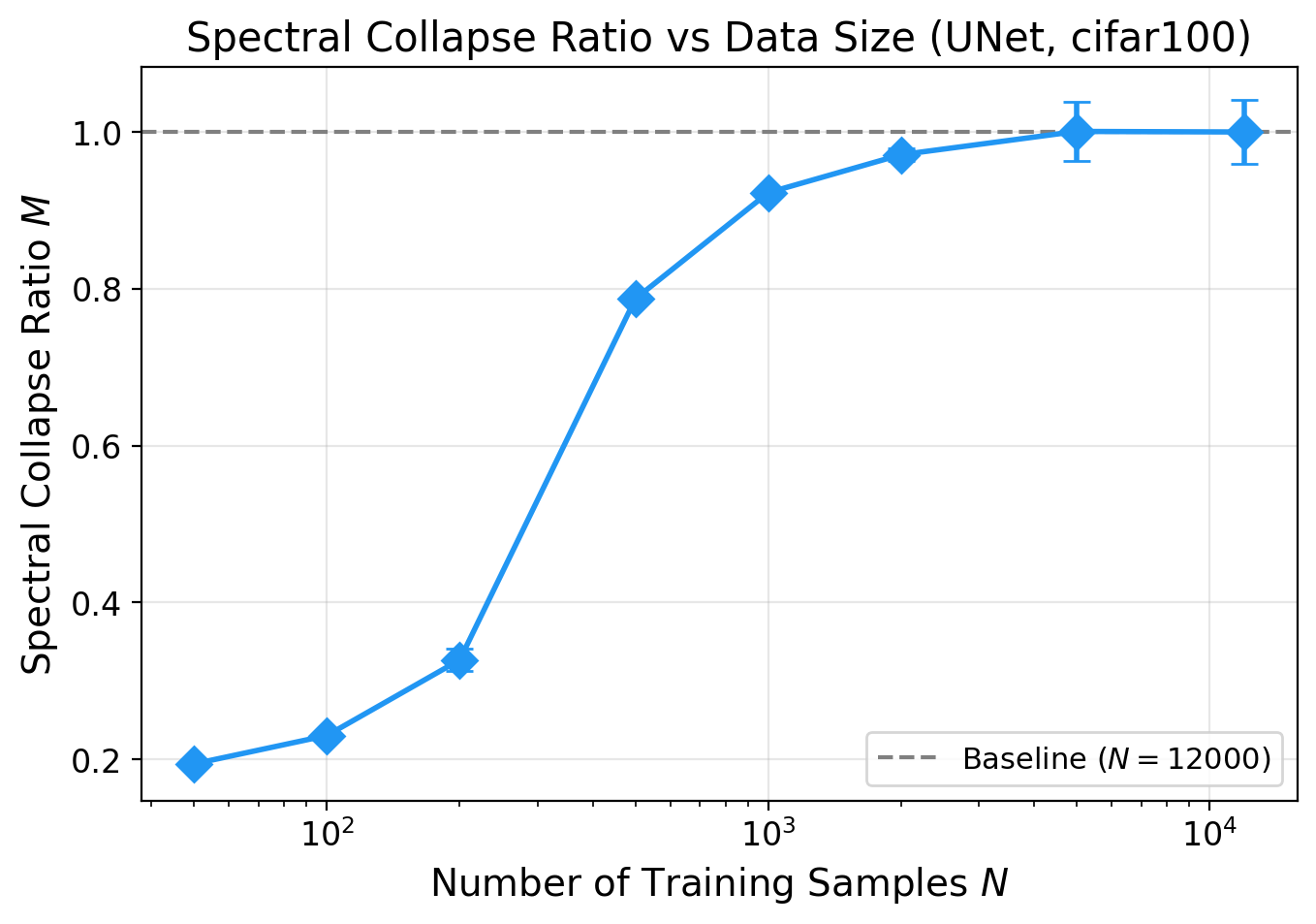}
    \hfill
    \includegraphics[width=0.41\textwidth]{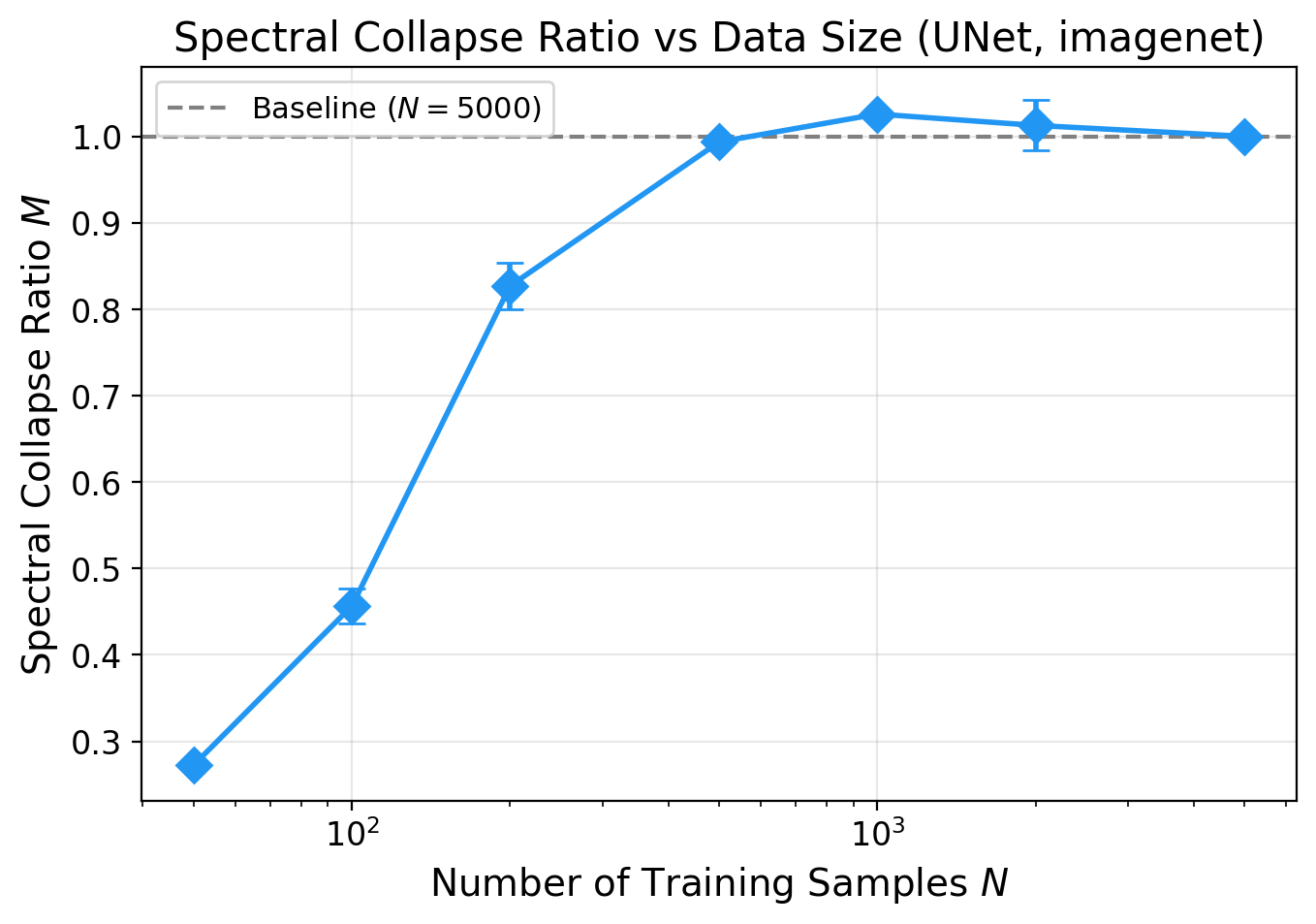}
    \medskip

    \includegraphics[width=0.41\textwidth]{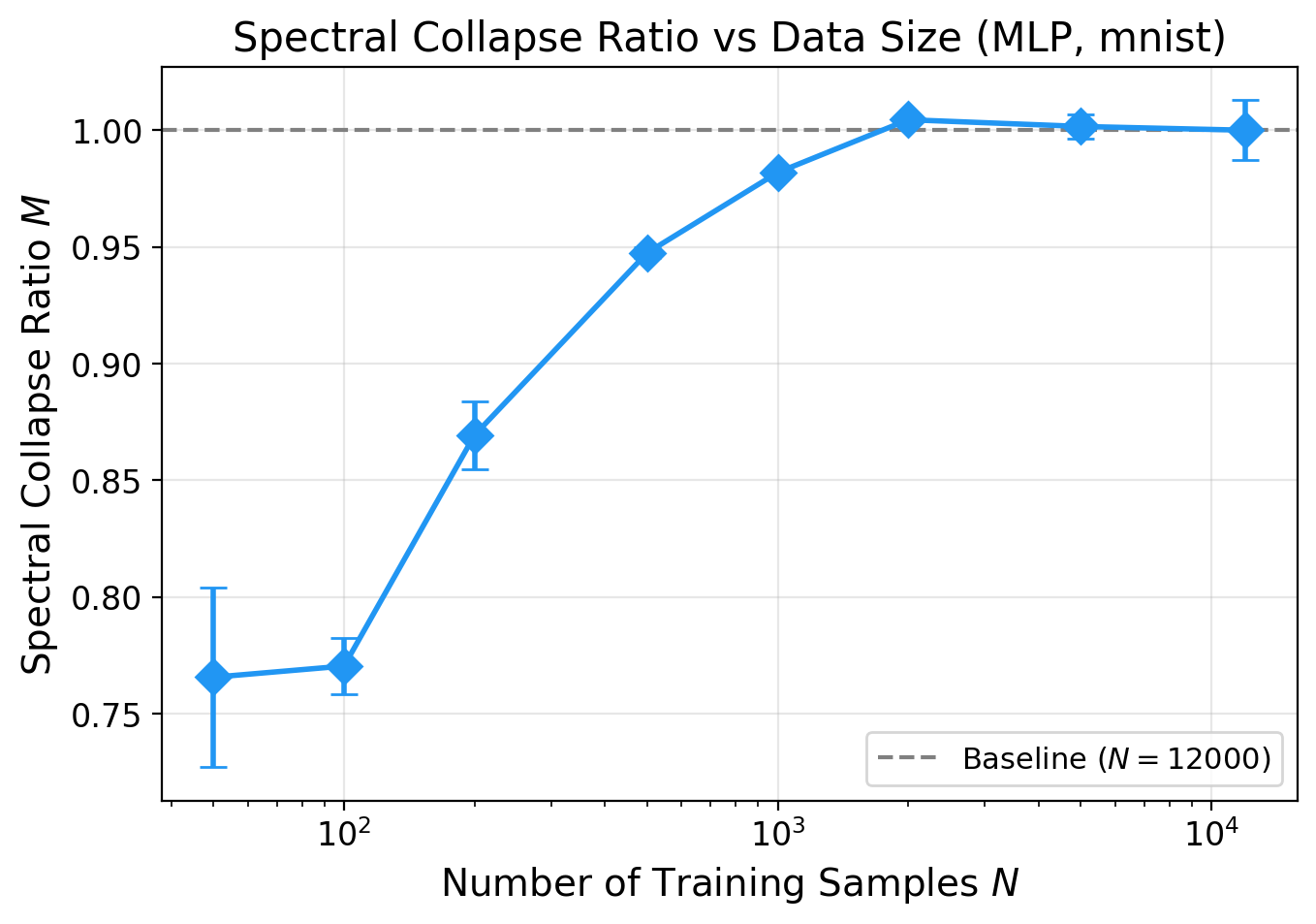}
    \hfill
    \begin{minipage}[b]{0.41\textwidth}\quad\end{minipage} % Invisible spacer to perfectly balance the 9th image
    
    \caption{Spectral Collapse Ratio vs. number of training samples.}
    \label{fig:score}
\end{figure}

We describe the complete experimental configuration for reproducibility. All experiments use flow matching with straight-line conditional probability paths.

\subsection{Datasets}

We train on four image datasets listed in Table~\ref{tab:datasets}.

\begin{table}[htbp]
\centering
\caption{Datasets used for training and evaluation.}
\label{tab:datasets}
\begin{tabular}{@{}lcccc@{}}
\toprule
Dataset & Resolution & Channels & Dimension $d$ & Training samples \\
\midrule
MNIST & $28\times28$ & 1 & 784 & 60,000 \\
CIFAR-10 & $32\times32$ & 3 & 3,072 & 50,000 \\
CIFAR-100 & $32\times32$ & 3 & 3,072 & 50,000 \\
ImageNet (imagenette) & $64\times64$ & 3 & 12,288 & 9,469 \\
\bottomrule
\end{tabular}
\end{table}

For ImageNet-scale experiments, we use the imagenette subset (10-class) due to computational constraints. All images are normalized to $[-1, 1]$ via $\texttt{Normalize}((0.5,\ldots),(0.5,\ldots))$.

\subsection{Data Size Sweep}

To study memorization, we train models on subsets of varying size from each dataset. For MNIST, CIFAR-10, and CIFAR-100, we sweep over
\begin{equation}
N \in \{50, 100, 200, 500, 1000, 2000, 5000, 12000\},
\end{equation}
where the largest configuration ($N = 12,\!000$) serves as our reference baseline. For imagenette, we sweep over
\begin{equation}
N \in \{50, 100, 200, 500, 1000, 2000, 5000\},
\end{equation}
with the largest subset capped at $5,\!000$ samples (the full imagenette dataset contains $9,\!469$ samples). At each data size, two independent training runs are performed with different random seeds ($\text{seed} = 42 + 1000 \times \text{run\_id}$) to estimate variance. The baseline minimum $g_{\min}^{\text{full}}$ used in the Spectral Collapse Ratio is computed from the model trained on the largest subset for each dataset (12,000 for MNIST/CIFAR-10/CIFAR-100; 5,000 for imagenette).

\subsection{Architectures}

We evaluate three model families spanning different inductive biases:

\paragraph{UNet.} A convolutional UNet with $4$ downsampling/upsampling blocks, 
base channels $64$, instance normalization, and SiLU activations. Time 
conditioning uses a 2-layer MLP projecting the scalar time to a $256$-dimensional 
embedding; the embedding is added to the features at the lowest spatial resolution.

\paragraph{Diffusion Transformer (DiT).} A patch-based 
transformer~\cite{peebles2023scalable} with patch size $4$. Images are split 
into non-overlapping patches, embedded to dimension $256$, and processed by 
$4$ transformer blocks with $4$ attention heads and MLP expansion factor $4$. 
Time conditioning uses adaptive layer normalization (adaLN). Positional 
information is provided by learned positional embeddings. Dropout rate is 
$0.1$. The same architecture is used across all datasets.

\paragraph{MLP.} A fully-connected network with $5$ hidden layers of width 
$2048$, LayerNorm, SiLU activations, and Dropout($0.1$). Time is concatenated 
to the flattened input. MLP experiments are restricted to MNIST; the 
architecture does not scale effectively to higher-dimensional RGB images due 
to the absence of spatial inductive biases.

\begin{table}[htbp]
\centering
\caption{Model configurations and approximate parameter counts.}
\label{tab:architectures}
\begin{tabular}{@{}lccc@{}}
\toprule
Architecture & Configuration & Parameters (MNIST) & Parameters (CIFAR/ImageNet) \\
\midrule
UNet & $4$ blocks, base\_ch$=64$ & $\sim$1.5M & $\sim$2.0M \\
DiT & $L=4$, $d=256$, $h=4$ & $\sim$3.5M & $\sim$5.6M \\
MLP & $5$ layers, width$=2048$ & $\sim$16.8M & -- \\
\bottomrule
\end{tabular}
\end{table}

\subsection{Training}

All models are trained to regress the target velocity field $u(x_t, t) = x_1 - x_0$ under the conditional flow matching objective~\cite{lipmanflow}:
\begin{equation}
\mathcal{L}(\theta) = \mathbb{E}_{t, x_0, x_1}\left[\|v_\theta(x_t, t) - (x_1 - x_0)\|^2\right],
\end{equation}
where $t \sim \mathcal{U}(0,1)$, $x_0 \sim \mathcal{N}(0, I)$, $x_1$ is a training sample, and $x_t = (1-t)x_0 + t x_1$.

Training hyperparameters are summarized in Table~\ref{tab:training}. 
For the MLP, we use mixed-precision training (FP16) via PyTorch automatic 
mixed precision (AMP), and maintain an exponential moving average (EMA) of 
model weights with decay $0.999$; the EMA parameters are used for all 
evaluations. The UNet and DiT are trained in standard FP32 without EMA.

\begin{table}[htbp]
\centering
\caption{Training hyperparameters.}
\label{tab:training}
\begin{tabular}{@{}lccc@{}}
\toprule
Hyperparameter & UNet & DiT & MLP \\
\midrule
Training steps & 8,000 & 8,000 & 20,000 \\
Batch size & 64 & 64 & 64 \\
Learning rate & $5\times10^{-4}$ & $5\times10^{-4}$ & $1\times10^{-4}$\\
Optimizer & Adam & AdamW & AdamW \\
Weight decay & 0 & 0.01 & 0.01 \\
LR schedule & Cosine & Cosine & OneCycle (10\% warmup) \\
Gradient clip & 1.0 & 1.0 & 5.0 \\
EMA & No & No & Yes (0.999) \\
Mixed precision & No & No & Yes \\
\bottomrule
\end{tabular}
\end{table}

\subsection{Finite-Time Spectral Sensitivity Estimation}

The FTSS $g(t)$ is estimated via Algorithm~\ref{alg:gain} in the main text. We use $K = 15$ trajectories, $T = 20$ uniformly-spaced time points in $[0, 0.95]$, and perturbation scale $\varepsilon = 0.1 \times \|x_t\|$. Integration uses Euler's method with $N_{\text{steps}} = 200$ steps. Each perturbation direction $u \sim \text{Unif}(S^{d-1})$ is drawn independently. Velocity predictions are clamped to $[-10, 10]$ and state values to $[-5, 5]$ during integration for numerical stability. The results of the Spectral Collapse Ratio are reported as mean $\pm$ standard deviation over the $2$ training runs.

\section{Relation to Finite-Time Lyapunov Exponents}
\label{app:lyapunov}

The FTSS $g(t)$ and finite-time Lyapunov exponents (FTLEs) both characterize 
the singular-value spectrum of the state-transition matrix $\Phi(1,t)$, but 
they measure fundamentally different aspects of its geometry. This appendix 
clarifies the mathematical distinction.

\subsection{Worst-Case Instability vs.\ Average Sensitivity}

For a fixed initial condition $x_t$ and perturbation direction $u$, the 
FTLE over the interval $[t,1]$ is~\cite{shadden2005definition}:
\begin{equation}
\lambda(t, u; x_t) = \frac{1}{1-t}\log\frac{\|\Phi(1,t;x_t)\,u\|}{\|u\|}.
\end{equation}
It measures the exponential separation rate for a single trajectory $x_t$ and 
a single direction $u$. Maximizing over directions yields the dominant FTLE 
for that trajectory:
\begin{equation}
\lambda_{\max}(t; x_t) = \frac{1}{1-t}\log\sigma_{\max}\bigl(\Phi(1,t;x_t)\bigr),
\end{equation}
where $\sigma_{\max}$ is the largest singular value of the state-transition 
matrix. The dominant FTLE is widely used in chaotic dynamics and turbulent 
mixing to identify the most unstable direction at each point in state 
space~\cite{haller2015lagrangian}.

Averaging $\lambda(t,u;x_t)$ over isotropic directions $u \sim \mathrm{Unif}(S^{d-1})$ 
and trajectories $x_t$ gives a mean FTLE:
\begin{equation}
\bar{\lambda}(t) = \mathbb{E}_{x_t}\!\left[\mathbb{E}_{u}\bigl[\lambda(t,u;x_t)\bigr]\right]
= \frac{1}{d}\;\mathbb{E}_{x_t}\!\left[\sum_{i=1}^{d} \frac{\log\sigma_i(t;x_t)}{1-t}\right].
\end{equation}

In contrast, the FTSS from Proposition~\ref{prop:rms} aggregates differently:
\begin{equation}
g(t)^2 = \frac{1}{d}\;\mathbb{E}_{x_t}\!\left[\sum_{i=1}^{d} \sigma_i^2(t;x_t)\right].
\end{equation}
The key structural difference is the order of operations: the mean FTLE takes 
the logarithm before averaging over directions and trajectories, while the 
FTSS averages the squared singular values first. Consequently, the log and 
expectation do not commute:
\begin{equation}
-\log g(t) \neq (1-t)\,\bar{\lambda}(t).
\end{equation}
The left-hand side summarizes collective spectral mass after aggregation; the 
right-hand side averages directional growth rates. The two quantities are 
numerically distinct and sensitive to different aspects of the transport 
geometry.

\subsection{Illustrative Example}

The distinction between the mean FTLE and the FTSS is not merely formal; it 
has quantitative consequences. Consider a generative flow where, at some time 
$t$, half of the trajectories encounter an anisotropic transport operator 
$\Phi_A$ and the other half encounter $\Phi_B$, defined as:
\begin{equation}
\Phi_A = \operatorname{diag}(10,\,1,\,1,\,\ldots,\,1), \qquad
\Phi_B = \operatorname{diag}(10,\,0.1,\,0.1,\,\ldots,\,0.1),
\end{equation}
in a high-dimensional space ($d \gg 1$). Both operators share the same 
dominant singular value $\sigma_{\max}=10$, so the dominant FTLE is identical 
for all trajectories:
\begin{equation}
\lambda_{\max} = \frac{1}{1-t}\log 10.
\end{equation}

Now compare the two aggregate summaries. The mean FTLE $\bar{\lambda}(t)$, 
which averages $\log\sigma_i$ over directions and trajectories, gives:
\begin{equation}
(1-t)\,\bar{\lambda}(t) 
= \frac{1}{d}\;\mathbb{E}_{x_t}\!\left[\sum_{i=1}^{d} \log\sigma_i\right]
\approx \frac{1}{2}\cdot\frac{\log 10 + (d-1)\log 1}{d} 
     + \frac{1}{2}\cdot\frac{\log 10 + (d-1)\log 0.1}{d}.
\end{equation}
For large $d$, the $d-1$ contracting directions in $\Phi_B$ dominate. 
Specifically, $\log 1 = 0$, so the $\Phi_A$ term becomes $\frac{1}{2}\cdot\frac{\log 10}{d} \approx 0$, 
and the $\Phi_B$ term becomes $\frac{1}{2}\cdot\frac{\log 10 + (d-1)\log 0.1}{d} \approx \frac{1}{2}\log 0.1 \approx -1.15$.
Thus $(1-t)\,\bar{\lambda}(t) \approx -1.15$, indicating strong average contraction 
on the logarithmic scale.

In contrast, the FTSS first computes the mean squared singular value under 
the expectation:
\begin{equation}
g(t)^2 = \frac{1}{2}\cdot\frac{100 + (d-1)\cdot 1}{d} 
      + \frac{1}{2}\cdot\frac{100 + (d-1)\cdot 0.01}{d}.
\end{equation}
For large $d$, $g_A^2 \approx 1$ and $g_B^2 \approx 0.01$, so
\begin{equation}
g(t)^2 \approx \frac{1}{2}(1 + 0.01) = 0.505, \qquad 
g(t) \approx 0.71, \qquad 
-\log g(t) \approx 0.34.
\end{equation}

The two summaries paint different pictures. The mean FTLE is dominated by the 
many strongly contracting directions in $\Phi_B$, reporting substantial 
compression. The FTSS, by squaring the singular values before averaging, 
gives more weight to the single amplified direction ($\sigma=10$) and reports 
only mild overall attenuation ($g \approx 0.71$, close to the identity value 
of $1.0$). Neither summary is more correct; they measure different aspects of 
the transport geometry.

\subsection{Computational Considerations}

Isolating $\lambda_{\max}(t)$ requires iterative tracking of the dominant 
singular value, typically demanding power iteration, tangent-space 
integration, or repetitive Jacobian-vector products across specialized 
trajectories~\cite{benettin1980lyapunov}. Computing the full mean 
log-singular-value would similarly require explicit spectral decomposition. 
By contrast, the FTSS requires no explicit derivative extraction or matrix 
factorization. As formalized in Algorithm~\ref{alg:gain}, it is computed 
directly through forward-pass integration of isotropic finite differences, 
offering a different computational method.

\end{document}